%% file: main_arvix.tex
\documentclass{article}

\usepackage{microtype}
\usepackage{graphicx}
\usepackage{subcaption}
\usepackage{booktabs} 
\usepackage{tabularx}             
\usepackage{multirow}             
\usepackage{enumitem}          
\usepackage{hyperref}
\usepackage{stfloats} 
\usepackage{afterpage}

\usepackage{pifont}               
\newcommand{\cmark}{\ding{51}}
\newcommand{\xmark}{\ding{55}}

\usepackage[accepted]{arxiv}

\usepackage{amsmath}
\usepackage{amssymb}
\usepackage{mathtools}
\usepackage{amsthm}

\usepackage[capitalize,noabbrev]{cleveref}

\theoremstyle{plain}
\newtheorem{theorem}{Theorem}[section]

\theoremstyle{definition}

\theoremstyle{remark}

\usepackage[textsize=tiny]{todonotes}

\include{custom_notation}

\begin{document}

\twocolumn[
  \icmltitle{SymPlex: A Structure-Aware Transformer for Symbolic PDE Solving}




  \begin{icmlauthorlist}
    \icmlauthor{Yesom Park}{math}
    \icmlauthor{Annie C. Lu}{math}
    \icmlauthor{Shao-Ching Huang}{oarc}
    \icmlauthor{Qiyang Hu}{oarc}
    \icmlauthor{Y. Sungtaek Ju}{mech}
    \icmlauthor{Stanley Osher}{math}
  \end{icmlauthorlist}

  \icmlaffiliation{math}{Department of Mathematics, University of California, Los Angeles, Los Angeles, CA, USA}
  \icmlaffiliation{mech}{Mechanical and Aerospace Engineering, University of California, Los Angeles, Los Angeles, CA, USA }
  \icmlaffiliation{oarc}{Office of Advanced Research Computing (OARC),
University of California, Los Angeles, Los Angeles, CA, USA}

  \icmlcorrespondingauthor{Stanley Osher}{sjo@math.ucla.edu}


  \vskip 0.3in
]



\printAffiliationsAndNotice{}  

\begin{abstract}
We propose \textbf{SymPlex}, a reinforcement learning framework for discovering analytical symbolic solutions to partial differential equations (PDEs) without access to ground-truth expressions.
SymPlex formulates symbolic PDE solving as tree-structured decision-making and optimizes candidate solutions using only the PDE and its boundary conditions.
At its core is \textbf{SymFormer}, a structure-aware Transformer that models hierarchical symbolic dependencies via tree-relative self-attention and enforces syntactic validity through grammar-constrained autoregressive decoding, overcoming the limited expressivity of sequence-based generators.
Unlike numerical and neural approaches that approximate solutions in discretized or implicit function spaces, SymPlex operates directly in symbolic expression space, enabling interpretable and human-readable solutions that naturally represent non-smooth behavior and explicit parametric dependence.
Empirical results demonstrate exact recovery of non-smooth and parametric PDE solutions using deep learning–based symbolic methods.

\end{abstract}

\section{Introduction}
Analytical solutions to partial differential equations (PDEs) play a fundamental role in science and engineering, providing exact descriptions of physical phenomena and direct interpretability.
Unlike numerical approximations, closed-form expressions can represent non-smooth behavior exactly, generalize analytically beyond a computational domain, and expose parametric dependencies critical for inverse modeling, control, and bifurcation analysis.
Despite their importance, finding analytical PDE solutions automatically remains a largely unsolved challenge.

Most existing PDE solvers operate in approximate representation spaces.
Classical numerical methods, including finite difference (FDM)~\cite{richtmyer1959difference, leveque1998finite}, finite volume (FVM)~\cite{leveque2002finite, toro2013riemann}, and finite element (FEM) methods~\cite{ciarlet2002finite, hughes2003finite}, approximate solutions on discretized grids, introducing numerical diffusion and degrading accuracy near sharp gradients or discontinuities.
Neural approaches such as physics-informed neural networks (PINNs)~\cite{raissi2019physics} represent solutions implicitly via fixed architectures, but suffer from approximation bias, limited extrapolation, and lack of symbolic interpretability.
As a result, both paradigms struggle when exactness, symbolic structure, and global generalization are simultaneously required (Fig.~\ref{fig:hj_comparison}).

\begin{figure}[t]
  \vskip 0.2in
  \begin{center}
    \centerline{\includegraphics[width=\columnwidth]{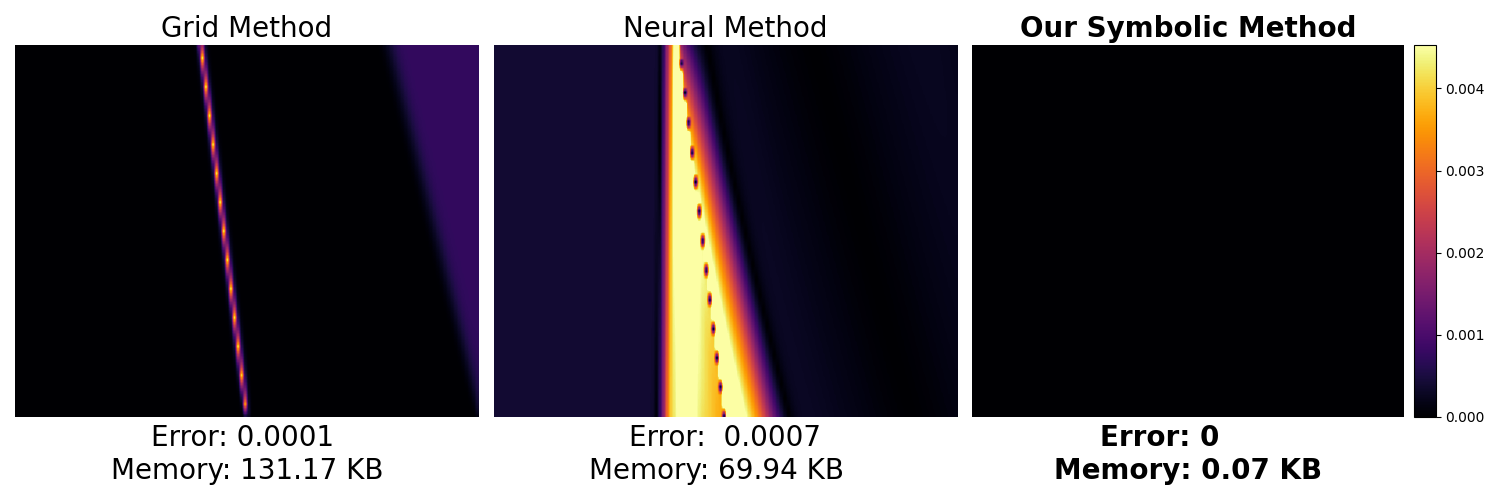}}
    \caption{Comparison of discontinuous solutions of a Hamilton-Jacobi PDE using grid-based, neural network, and our symbolic method. The symbolic approach achieves exact solutions with minimal storage memory, whereas other methods exhibit larger errors and higher storage requirements.}
    \label{fig:hj_comparison}
  \end{center}
  \vspace{-25pt}
\end{figure}

A fundamentally different perspective is to solve PDEs by discovering symbolic expressions. From a computational standpoint, symbolic PDE solution discovery is a discrete combinatorial search problem over expression trees composed of mathematical operators, variables, and constants.
While the formulation closely parallels natural language generation, the present setting poses two key challenges.
First, symbolic expressions are inherently tree-structured, with strict grammatical constraints defined by operator arity and hierarchical relationships, rather than linear token sequences.
Second, the problem is inherently unsupervised: ground-truth expressions are unknown, and supervision is provided only implicitly through PDE and boundary conditions, yielding sparse and delayed learning signals, arriving only after a complete expression has been generated.

Recent work~\cite{wei2024closed} explored symbolic PDE discovery using reinforcement learning (RL) with recurrent neural networks (RNNs). While promising, such recurrent approaches struggle to model the long-range dependencies and hierarchical structure inherent to symbolic mathematics, limiting scalability to complex PDEs.
Transformers naturally address long-range dependencies, yet standard sequence-based Transformers lack inductive biases for tree structure and grammatical validity, and existing tree-based variants rely on externally provided parse trees—an assumption incompatible with symbolic PDE discovery.

In this work, we formulate symbolic PDE discovery as a structured decision-making problem over expression trees and introduce \textbf{SymPlex} (SYMbolic PDE Learning EXplorer), a framework for discovering analytical PDE solutions without access to ground-truth expressions.
At its core is \textbf{SymFormer}, a structure-aware Transformer that defines a fully differentiable policy over symbolic expressions.
SymFormer integrates tree-relative self-attention to model hierarchical dependencies, grammar-constrained autoregressive decoding to enforce syntactic validity, and traversal-aware positional encoding to preserve structural context during generation.
To our knowledge, this is the first Transformer-based architecture for symbolic PDE solution discovery.

Because candidate expressions can only be evaluated after complete generation and because no target solutions are available, training necessarily departs from standard supervised learning for Transformers.
We therefore cast symbolic PDE discovery as an RL problem, training SymFormer with PDE-based rewards and augmenting policy optimization with diversity-aware top-$k$ memory for exploration, imitation of high-reward expressions for stability, and curriculum learning to manage the rapidly growing combinatorial search space as expression complexity increases.

Through extensive experiments, we show that SymPlex reliably discovers exact analytical solutions, including non-smooth solutions that challenge numerical and neural methods.
Furthermore, by treating physical coefficients as symbolic variables, SymPlex recovers parametric solutions that explicitly reveal how solutions depend on underlying parameters, providing direct physical insight and enabling downstream applications such as inverse problems.
Table~\ref{tab:comparison} summarizes the differences between classical numerical methods, neural network–based approaches, and our symbolic framework.

\begin{table*}[t]
\centering
\small
\renewcommand{\arraystretch}{1.0}
\caption{Comparison of PDE solution paradigms. Symbolic solutions offer interpretability, exactness, and parametric generalization.}
\label{tab:comparison}
\resizebox{\textwidth}{!}{%
\begin{tabular}{lccccc}
\toprule
Method & Interpretability & Solution & Accuracy & Generalization & Parametric \\
& & Storage & & & Discovery \\
\midrule
Classical Numerical Methods & Low & High & Approximate & Grid-limited & \xmark \\
Neural Network Methods     & Low & High & Non-guaranteed & Limited & \xmark \\
\textbf{SymPlex (Ours)}     & \textbf{High} & \textbf{Very Low} & \textbf{Exact} & \textbf{Analytic} & \cmark \\
\bottomrule
\end{tabular}}
\end{table*}

In summary, our contributions are:
\begin{enumerate}
    \item We formulate PDE solving as symbolic expression discovery, highlighting exact representation of non-smooth solutions, analytic generalization, and explicit parametric dependence.
    \item We introduce \textbf{SymFormer}, a structure-aware Transformer for symbolic expression generation that incorporates explicit hierarchical inductive biases.
    \item We present \textbf{SymPlex}, an RL framework for PDE-guided symbolic discovery without requiring access to ground-truth solutions, integrating policy optimization with constant refinement, diversity-aware top-$k$ memory, and curriculum learning.
\end{enumerate}

Together, these contributions establish a new paradigm for PDE solution discovery that unifies symbolic computation, hierarchical modeling, and reinforcement learning.

\section{Related Work}
\paragraph{Symbolic PDE and expression discovery.} 
Automatically discovering analytical or symbolic solutions for PDEs has been studied extensively using symbolic regression techniques. Classical methods include genetic programming (GP) and heuristic search strategies \cite{forrest1993genetic, koza1994genetic, schmidt2009distilling, blkadek2019solving, randall2022bingo, jiang2023symbolic}, which explore the space of candidate expressions to minimize residuals or other fitness measures. While effective for small-scale problems, these approaches often struggle with combinatorial search spaces, producing overly complex expressions without corresponding performance gains. 

To improve efficiency and scalability, deterministic symbolic search methods \cite{liang2025finite} systematically combine operators to construct PDE solutions, providing a structured approach without relying on stochastic exploration. More recently, neural networks have been integrated to guide symbolic search via reinforcement learning, such as policy gradients \cite{wei2024closed} or Monte Carlo tree search  \cite{sahoo2018learning, zhang2023deep, dong2024neural}, enabling candidate expressions that satisfy differential constraints.
However, these methods have so far been demonstrated only on very simple PDEs, typically admitting smooth solutions and shallow symbolic structures, limiting their applicability to more complex PDEs. Supervised learning–based symbolic regression methods, pre-trained on large synthetic datasets, can produce expressions in a single forward pass \cite{li2023neural, kamienny2022end, biggio2021neural}, though their performance degrades when the target PDE distribution diverges from the training data.

Recent efforts integrate PINNs with symbolic regression to derive interpretable or approximate analytical solutions. Majumdar et al.~\yrcite{majumdar2022physics} combined neural network representations with a small symbolic basis to encode PDE constraints; while effective for simple PDEs, their fixed-basis formulation limits expressiveness for general expressions. Other works first train a PINN and then fit a symbolic expression to its output~\cite{changdar2024integrating, huang2025physics, das2025physics}, yielding interpretable solutions for simple PDEs but constrained by the PINN’s representation and approximation errors. Consequently, these methods are limited in expressiveness, accuracy, and scalability for long or complex PDE solutions.
\vspace{-.6em}
\paragraph{Tree-structured and grammar-aware Transformers.}  
Transformers have been adapted to hierarchical and structured data, such as trees or graphs, to enable reasoning over compositional representations \cite{wang2019tree, peng2021integrating, hu2021r2d2, zhu2025tamer, fu2025satd}. In symbolic mathematics and program synthesis, grammar-constrained and arity-aware decoding has been used to enforce syntactic correctness during generation \cite{yin2017syntactic, saxton2019analysing, allamanis2018survey}. Existing approaches typically assume fixed parse trees or require full supervision, limiting their applicability for solving PDEs, where the expression tree must be inferred dynamically. Our SymFormer architecture combines tree-relative self-attention, traversal-aware positional encoding, and grammar-constrained autoregressive decoding, providing the inductive biases needed to generate valid, hierarchical, and semantically meaningful symbolic solutions directly from PDEs.

\section{Preliminary}

\subsection{Parametric PDEs}
We consider discovering analytical solutions to \emph{parametric partial differential equations (PDEs)}. Let $\Omega \subset \mathbb{R}^n$ denote a spatial domain with boundary $\partial \Omega$, and $t \in [0,T]$ denote time.  
A general parametric PDE is written as
\begin{equation}\label{eq:param_pde}
\mathcal{F}[u](x,t;\kappa) = f(x,t;\kappa), \quad (x,t) \in \Omega \times [0,T],
\end{equation}
where $\mathcal{L}$ is a differential operator, $f$ a source term, and $\kappa \in \mathbb{R}^p$ denotes physical parameters.  
For well-posedness, the PDE is supplemented with boundary conditions
\begin{equation}
\mathcal{B}[u](x,t;\kappa) = 0, \quad x \in \partial \Omega,
\end{equation}
and, for time-dependent problems, an initial condition
\begin{equation}
u(x,0;\kappa) = u_0(x;\kappa), \quad x \in \Omega.
\end{equation}
Consequently, the solution $u(x,t;\kappa)$ depends explicitly on both the independent variables $(x,t)$ and the parameters $\kappa$.

\subsection{Symbolic Expression Representation}\label{sec:preliminary_symbolic_expression}
We represent solutions $u(x,t;\kappa)$ as structured expressions encoded by \emph{abstract syntax trees (ASTs)}, generated via \emph{prefix traversal} to enable unambiguous, autoregressive construction while preserving hierarchy.

Tokens in the expression belong to three categories:
\vspace{-.5em}
\begin{itemize}
    \item \textbf{Binary operators:} $\mathcal{B} = \{+, -, \times, /\}$, each with arity $\alpha(b) = 2$.
    \item \textbf{Unary operators:} $\mathcal{U} = \{\sin, \cos, \exp, \sqrt, \dots\}$, each with arity $\alpha(u) = 1$.
    \item \textbf{Variables, parameters, and constants:} 
    \[
        \mathcal{T} = \{x_1,\dots,x_n, t, \kappa, \text{const}\},
    \] 
    which are leaf nodes with arity $\alpha(t) = 0$.
\end{itemize}
\vspace{-.5em}
The full vocabulary is $\mathcal{V} = \mathcal{B} \cup \mathcal{U} \cup \mathcal{T}$.  
Token arities define a deterministic grammar: at each step, the valid next tokens are uniquely determined, ensuring every prefix sequence forms a syntactically correct AST.  
This structural constraint supports both symbolic generation and reinforcement learning optimization, guaranteeing well-formed, parametric expressions for $u(x,t;\kappa)$.

\section{SymFormer: Structure-Aware Attention for Symbolic Expressions}\label{sec:symformer}
Symbolic PDE discovery is fundamentally a \emph{structured decision-making problem over trees}, rather than a flat sequence prediction task.  
\textbf{SymFormer} extends the standard Transformer to explicitly incorporate symbolic hierarchy and grammar, enabling differentiable, grammar-preserving generation of expression trees—critical when ground-truth solutions are unavailable 

\begin{figure*}[t]
    \centering
    \includegraphics[width=0.8\textwidth]{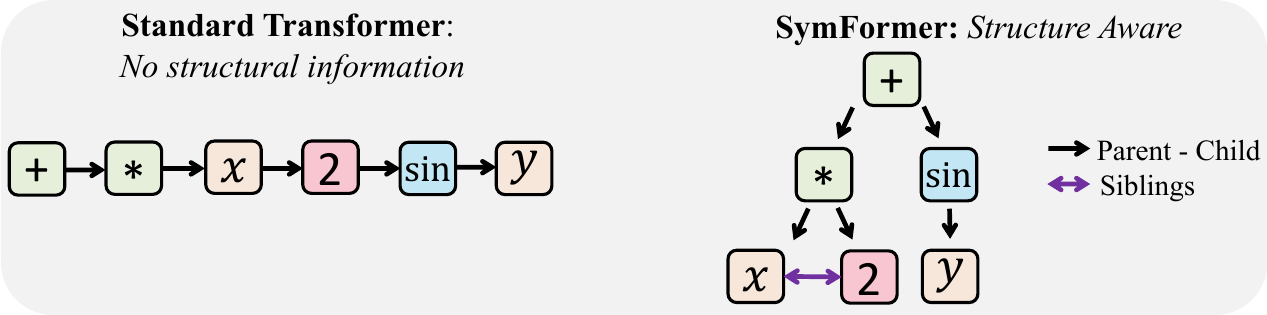}
    \caption{
    \textbf{Left}: Standard attention ignores tree structure. \textbf{Right}: SymFormer explicitly models hierarchical relations.}
    \label{fig:symformer_attention}
\end{figure*}

\subsection{Tree-Relative Self-Attention for Hierarchical Reasoning}
Semantic meaning in symbolic expressions arises from hierarchical relationships: operators depend on operands, which may be non-adjacent in prefix notation.  
Standard self-attention, based on token identity and linear position, cannot capture such hierarchical dependencies effectively. 

SymFormer introduces \emph{tree-relative self-attention}, which conditions attention scores on structural relations inferred from the partial AST.  
Each token pair $(i,j)$ is assigned a discrete relation:
\[
r_{ij} =
\begin{cases}
0, & i=j \quad (\text{self})\\
1, & p[j]=i \quad (\text{parent})\\
2, & p[i]=j \quad (\text{child})\\
3, & p[i]=p[j]\neq -1 \quad (\text{sibling})\\
4, & i \text{ is ancestor of } j\\
5, & \text{otherwise},
\end{cases}
\]
where $p[\cdot]$ denotes the parent index in the inferred tree.
Each relation type is associated with a learnable embedding $R_{r_{ij}}$, which augments the attention computation:
\[
\mathrm{Attn}(Q,K,V)
=
\mathrm{softmax}\!\left(
\frac{Q (K+R)^\top}{\sqrt{d}}
\right)V.
\]

This allows operators to naturally attend to their operands, operands to their parents, and siblings to each other, enabling hierarchical information flow even in deep and partially completed expression trees.  
Unlike prior tree-based Transformers with fixed parse trees, SymFormer infers tree structure dynamically during autoregressive generation, which is critical for symbolic PDE discovery where the expression tree is unknown a priori.

\subsection{Traversal-Aware Positional Encoding}
To distinguish nodes in different subtrees that occupy similar structural roles, SymFormer retains sinusoidal positional encodings along the prefix traversal:
\[
\mathbf{X} \leftarrow \mathbf{X} + \mathrm{PosEnc}(\texttt{prefix\_position}).
\]
This preserves information about generation order and relative depth without introducing additional tree-specific parameters.  
Together with tree-relative self-attention, this allows structurally similar but contextually distinct nodes to be represented distinctly, supporting robust modeling of expressions with repeated substructures.

\subsection{Grammar-Constrained Autoregressive Generation}
Expression generation is autoregressive, with dynamic grammar and depth constraints enforced at each step.  
Each token is only expanded if the remaining depth budget allows its minimal required subtree height, ensuring that the generated expressions are syntactically valid and semantically meaningful.
The autoregressive generation proceeds as follows:

\begin{enumerate}
    \item Conditions on the previously generated prefix,
    \item Reconstructs the partial AST from arity constraints,
    \item Computes tree-relative self-attention over the inferred structure,
    \item Samples the next token from the grammar- and depth-constrained action space.
\end{enumerate}
Dynamic constraints ensure that leaf nodes are restricted to variables or constants, internal nodes satisfy operator arities, degenerate sub-expressions (e.g., $x-x$, $x/x$) are filtered, and each token is only expanded if it fits within the remaining depth budget.


This separation between symbolic validity, subtree feasibility, and representation learning allows SymFormer to focus on meaningful compositional reasoning.  
Depth-aware expansion is particularly beneficial for PDE symbolic expression discovery, where repeated sub-expressions and nested operator hierarchies are common.  
By explicitly modeling hierarchy, grammar, and depth feasibility, SymFormer faithfully generates deeper and more complex expressions than prior sequential or tree-based symbolic models.

In contrast to the RNN-based symbolic solver~\cite{wei2024closed}, which encode structure implicitly over flat sequences, and prior tree-based Transformers~\cite{wang2019tree, peng2021integrating, hu2021r2d2, zhu2025tamer}, which rely on fixed or externally provided parse trees, SymFormer represents hierarchy explicitly, enforces depth- and grammar-aware constraints, and leverages Transformer self-attention to model long-range dependencies across repeated subtrees.  

\subsection{Structure-Conditioned Expressive Power of SymFormer}
\label{sec:express_symformer}

In symbolic PDE discovery, solutions are structured expressions rather than flat token sequences.  
Standard sequence models, such as RNNs or vanilla Transformers, reason only over linear prefixes, which can conflate distant sub-expressions and limit generalization over hierarchical structure.
SymFormer explicitly models tree hierarchy and grammar via tree-relative self-attention and grammar-constrained generation.  
This allows the model to condition its next-token decisions on the structure of the partially generated expression tree rather than just the linear token order.

\begin{theorem}[Informal]
Any grammar-compatible next-token policy that depends only on the structure of a partial abstract syntax tree of bounded depth can be represented by SymFormer.
Equivalently, SymFormer can realize any decision rule defined over symbolic tree states, independent of the particular linear prefix representation.
\end{theorem}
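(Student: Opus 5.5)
Under bounded depth $D$ and a finite vocabulary $\mathcal{V}$, the set of partial ASTs, i.e.\ the states of the grammar-constrained decoder, is \emph{finite}. The plan is therefore to reduce the statement to a finite lookup problem and show that SymFormer can compute an injective encoding of the tree state at the last position. A final readout layer then implements an arbitrary table of next-token distributions, and the grammar mask of the constrained decoder guarantees that only valid tokens receive mass.

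\textbf{Step 1 (finiteness and canonical states).} I will define a partial tree state $\tau$ as the labelled, ordered partial AST reconstructed from a prefix by the arity rule, together with the marked open slot to be filled next. With arities bounded by $2$ and depth bounded by $D$, the number of such states is at most $|\mathcal{V}|^{2^{D+1}}$ times the number of slot positions, which is finite. Because prefix traversal is a bijection between ordered labelled trees and valid prefixes, the prefix-to-$\tau$ map is well defined. The phrase ``independent of the linear prefix'' then means that the target policy $\pi(\cdot\mid\tau)$ is a function of $\tau$ alone. I therefore only need SymFormer to compute some injective function of $\tau$.

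\textbf{Step 2 (encoding the tree into the last token).} I will use the relation embeddings $R_{r_{ij}}$ and the sinusoidal prefix positions. In the first layer I will choose $Q,K$ so that each token attends to itself and, via relation $2$ (child$\to$parent), to its parent. Stacking $D$ such layers lets each node accumulate in its residual stream a code for its root-to-node path: the sequence of ancestor tokens and the child index. The child index is recoverable by comparing prefix positions with the sibling relation $3$. At the current position, one attention head with uniform scores over the whole prefix (zero query) averages token-identity embeddings tagged by sinusoidal position. Because positions are at most $2^{D+1}$ and the sinusoidal features are linearly independent on this finite range, I will choose the value projections so that these averages are distinct, up to the known $1/n$ scaling, for distinct prefixes. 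Together with the bijection of Step 1, this gives an injective code $h(\tau)$. Tree-relative attention makes the construction robust, since the code can be built from (path, label) pairs rather than raw order. Either route suffices for injectivity.

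\textbf{Step 3 (readout).} Since $\{h(\tau)\}$ is a finite set of distinct vectors, a two-layer ReLU MLP can map each $h(\tau)$ to any prescribed logit vector $\ell(\tau)$. One way is to separate the points by a random projection and use piecewise-linear bump functions, or simply to interpolate on finitely many points. Setting $\ell(\tau)=\log\pi(\cdot\mid\tau)$ on the valid tokens and relying on the grammar/depth mask for the rest makes the softmax output exactly $\pi(\cdot\mid\tau)$. Zero-probability valid tokens would be handled by approximation to arbitrary precision, or by treating them as masked.

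\textbf{Main obstacle.} The delicate step is Step 2: showing that softmax attention with the specific form $Q(K+R)^\top$ genuinely yields an injective summary. The difficulty is that softmax averaging can collapse distinct multisets. I would first try the averaging argument with exact position tags, which exploits finiteness. If normalization causes collisions across prefix lengths, I would add one head that attends only to the self relation to recover the length, or equivalently read the current position's own encoding, to undo the scaling. The theorem being informal, stating the result ``up to arbitrary precision'' is an acceptable fallback.
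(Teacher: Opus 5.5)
\textbf{Gap in Step 2.} The gap is in the mechanism you rely on for injectivity. With zero query the scores $Q(K+R)^\top$ vanish. Writing $e(t)$ for the token embeddings, the head at the last position $n$ therefore returns
\[
\frac{1}{n}\sum_{j\le n}W_V\bigl(e(t_j)+\mathrm{PosEnc}(j)\bigr)=\frac{1}{n}W_V\sum_{j\le n}e(t_j)+\frac{1}{n}W_V\sum_{j\le n}\mathrm{PosEnc}(j),
\]
which is a token-only term plus a position-only term. The input is a \emph{sum} of token and position embeddings and $W_V$ is linear, so no token is ever tagged multiplicatively by its position. Linear independence of the sinusoids is therefore irrelevant: the output depends only on the multiset of tokens and the length.

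For example, take $(+,\sin,x,\sin)$ and $(\sin,+,x,\sin)$, i.e.\ $\sin(x)+\sin(\cdot)$ and $\sin(x+\sin(\cdot))$. They have the same length, the same token multiset and the same last token, so the code you describe takes the same value on both. Yet they are not even structurally isomorphic: the roots differ and the open slots sit at different depths. A target policy putting almost all mass on $x$ in one state and on $y$ in the other cannot be approximated, because identical logits give identical odds of $x$ versus $y$ after masking.

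This is not the normalization issue you flagged, and a self-relation head does not cure it. Your path-code alternative also ends with a linear average of per-node codes whose injectivity you have not argued, so ``either route suffices'' is unsupported. The repair is to apply a nonlinearity before averaging:
\begin{itemize}
\item Choose the token embeddings generically, so that the finitely many vectors $e(t)+\mathrm{PosEnc}(j)$ are pairwise distinct.
\item Switch off the attention output of the first layer.
\item Let that layer's position-wise ReLU FFN map each such vector to a one-hot code $\mathbf{e}_{(t,j)}$. This uses the same finite-interpolation fact you invoke in Step 3.
\end{itemize}
A uniform head in the next layer then returns $\frac{1}{n}\sum_{j\le n}\mathbf{e}_{(t_j,j)}$. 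Its support $\{(t_j,j)\}_{j\le n}$ determines the prefix, so the $1/n$ factor is harmless.

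\textbf{Comparison with the paper.} With this repair your argument is correct, and it differs from the paper's in an instructive way. The paper appeals to equivariance of tree-relative attention and asserts, without constructing it, that some parameters make the final representation $h(p)=\Phi(\mathcal{A}(p))$ both invariant on structural classes (which forget leaf labels) and injective on them. You instead encode the full labelled prefix injectively and push all invariance into the readout table. Since $\pi^\star$ is constant on structural classes, achieving $\|\pi_\theta(\cdot\mid p)-\pi^\star(\cdot\mid\mathcal{A}(p))\|_1\le\varepsilon$ for all $p$ already gives the ``in particular'' clause, with $2\varepsilon$ by the triangle inequality.

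Note that under your labelled reading of ``tree state'' the hypothesis is vacuous, because prefixes and labelled partial trees are in bijection. You are really proving universality over all grammar-compatible prefix policies, which implies the paper's formal version. The price is that the repaired construction never uses the relation embeddings $R_{r_{ij}}$. It shows the result follows from finiteness plus positional encodings, not from the tree-relative design the paper wants to credit.

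Your treatment of zero-probability valid tokens by approximation is also more accurate than the paper's claim of exact realization, which finite logits cannot achieve.
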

A formal statement and rigorous proof are provided in
Appendix~\ref{appen:express_symformer}.  
This result highlights two key strengths of SymFormer:
(i) Tree-relative attention enables direct information flow
between operators, operands, and subtrees, faithfully capturing the hierarchical dependencies inherent in compositional symbolic expressions. (ii) Dynamic grammar constraints ensure that every generated token preserves syntactic validity, thereby restricting the model’s action space to well-formed symbolic expressions and enabling safe exploration during unsupervised PDE discovery.

Together, these properties provide a theoretical justification for SymFormer’s architecture: it has sufficient expressive power to implement any tree-structured token-selection policy, enabling discovery of complex, hierarchical symbolic PDE solutions that cannot, in general, be represented by sequence-based symbolic models.

\section{Reinforcement Learning for Symbolic PDE Solutions}\label{sec:rl_training}

\textbf{SymPlex} trains SymFormer to discover symbolic PDE solutions without ground-truth expressions.  
Unlike conventional supervised sequential training, PDE solution discovery provides feedback only through PDEs and boundary conditions, yielding sparse, delayed rewards defined over entire expression trees.
We formulate symbolic PDE discovery as a reinforcement learning problem: SymFormer acts as a stochastic policy over symbolic trees, rewards are derived solely from PDE and boundarg conditions, and learning proceeds without any target expressions.

Algorithm~\ref{alg:training} summarizes the pipeline:
\vspace{-.6em}
\begin{itemize}
    \item SymFormer as a structure-aware autoregressive policy,
    \item Gradient-based optimization of continuous constants,
    \item A diversity-aware top-$k$ memory with imitation,
    \item Entropy-regularized policy optimization.
\end{itemize}
\vspace{-.6em}
\begin{algorithm}[tb]
  \caption{Training SymPlex for Symbolic PDE Solutions}
  \label{alg:training}
  \begin{algorithmic}
    \STATE {\bfseries Input:} SymFormer policy $\pi_\theta$, top-$k$ memory, curriculum stages $S$, iterations per stage $T_s$, number of sequences $N$, learning rate $\eta$
    \FOR{curriculum stage $s = 1$ {\bfseries to} $S$}
      \FOR{iteration $t = 1$ {\bfseries to} $T_s$}
        \STATE Sample $N$ sequences $\{\text{seq}_i\} \sim \pi_\theta$
        \FOR{each $\text{seq}_i$}
          \STATE Optimize constants $c_i$ for PDE residual
          \STATE Compute reward $R_i$ using PDE residual + BC loss
        \ENDFOR
        \STATE Update top-$k$ memory with diversity-aware selection
        \STATE Compute policy loss $\mathcal{L}_\text{policy}$ and imitation loss $\mathcal{L}_\text{imit}$
        \STATE Update policy parameters: $\theta \gets \theta - \eta \nabla_\theta (\mathcal{L}_\text{policy} + \mathcal{L}_\text{imit})$
      \ENDFOR
    \ENDFOR
  \end{algorithmic}
\end{algorithm}

\subsection{PDE-Aware Reward}
Let $T \sim \pi_\theta$ be a generated symbolic expression tree and $c$ the continuous constants optimized for $T$.  We define the PDE-based reward as
\[
\mathcal{R}(T,c) = \frac{1}{1 + \sqrt{\mathcal{E}(T,c)}}, \ 
\mathcal{E}(T,c) = \left(\mathcal{L}_{\text{PDE}} + \lambda_{\text{BC}} \mathcal{L}_{\text{BC}}\right)[u_T],
\]
where $u_T(x;c)$ denotes the function obtained by evaluating $T$ with constants $c$, $\lambda_{\text{BC}}>0$ is a regularization parameter, and $\mathcal{L}_{\text{PDE}}$ and $\mathcal{L}_{\text{BC}}$ are the PDE and boundary residuals measured as $L^2$ norms, similar to PINNs \cite{raissi2019physics}.

Constants $c$ are optimized separately via gradient descent prior to reward evaluation, decoupling discrete structure search from continuous parameter fitting.
This design ensures that the reward reflects the expressive adequacy of the symbolic structure rather than sensitivity to constant initialization, thereby reducing reward variance and stabilizing policy learning.

\afterpage{%
\begin{table*}[t]
\centering
\caption{Summary of tested differential equations with problem settings.}\label{tab:summary_pdes}
\renewcommand{\arraystretch}{1.4}
\resizebox{\textwidth}{!}{%
\begin{tabular}{c c l c}
\hline
Problem & Name & PDE & IC / Source Term \\ \hline
\multirow{3}{*}{Smooth Problem}
 & Poisson & $ -u_{xx} - u_{yy} = f(x,y)$ & $f(x,y) = -12 x^2 - 4.8 y^2$ \\
 & Advection & $u_t + u_x + u_y = 0$ & $u(x,y,0) = \exp\left(-\left(x^2+y^2\right)/0.5\right)$ \\
 & Heat  & $u_t - u_{xx} - u_{yy} = 0$ & $u(x,y,0)=\sin(x)\cos(y)$ \\ \hline
\multirow{2}{*}{Non-Smooth Problem}
 & Eikonal & $u_t+ \sqrt{u_x^2+u_y^2} = 0$  & $u(x,y,0) = \sqrt{x^2+y^2}$ \\
 & Burgers & $u_t- \frac{1}{2}\left(u_x^2+u_y^2\right) = 0$ &  $u(x,y,0)=|x|+|y|$ \\ \hline
\multirow{2}{*}{Parametric Solution}
 & Advection & $u_t + \kappa(u_x+u_y) = 0$ & $u(x,y,0)=\begin{cases}
1 - |x| - |y|, & \text{if } |x| + |y| \le 1, \\
0, & \text{if } |x| + |y| > 1.
\end{cases}$ \\
 & Heat  & $u_t - \kappa(u_{xx} + u_{yy}) = 0$ & $u(x,y,0)=\exp(-x)\exp(-y)$ \\ \hline
\end{tabular}}
\renewcommand{\arraystretch}{1} 
\end{table*}}
\subsection{Policy Loss with Entropy and Imitation}
At each node of a generated tree $T$, the policy predicts a distribution over the next token, with node-wise entropy
\[
\mathcal{H}(p) = - \sum_{v \in \mathcal{V}} p(v) \log p(v),
\]
encouraging exploration and preventing premature convergence in the sparse, combinatorial symbolic search space.

The RL policy loss is
\[
\mathcal{L}_{\text{policy}} = - \mathbb{E}_{T\sim\pi_\theta} \Big[ \log \pi_\theta(T) \cdot \mathcal{R}(T,c) \cdot w_\text{depth} \Big] 
- \lambda_\text{ent} \mathcal{H}[\pi_\theta],
\]
where $w_\text{depth}$ penalizes excessively deep trees. The entropy term ensures that the policy continues to explore diverse tree structures, mitigating early collapse to suboptimal patterns.

In addition, an imitation loss reinforces previously discovered high-reward trees stored in the top-$k$ memory:
\[
\mathcal{L}_{\text{imit}} = \sum_{(T^*, c^*) \in \text{top-}k} w_R \, \text{NLL}(\pi_\theta(T^*) \mid c^*),
\]
where $w_R$ is a reward-based weight and NLL denotes the negative log-likelihood of reproducing stored trees. This improves sample efficiency 
by encouraging reuse of successful structures while maintaining exploration.

The total loss for updating $\theta$ is
\[
\mathcal{L}_{\text{total}} = \mathcal{L}_{\text{policy}} + \lambda \, \mathcal{L}_{\text{imit}},
\]
where $\lambda$ balances imitation and policy objectives.

\subsection{Theoretical Guarantee for Symbolic Recovery}
The reward $\mathcal{R}$ is designed so that its global maxima correspond to the PDE solution whenever it is representable in the symbolic hypothesis class.  
This ensures that SymPlex can recover exact solutions under ideal optimization while providing probabilistic guarantees for near-optimal policies.

\begin{theorem}[Symbolic Recovery by SymPlex]
\label{thm:symplex_recovery}
Let $\pi_\theta$ denote the learned SymPlex policy.  

\noindent\textbf{(i) Exact Recovery:} If $\pi_\theta$ is globally optimal, there exists a tree $T^*$ in its support with constants $c^*$ such that
\[
u_{T^*}(x;c^*) = u^*(x) \quad \text{a.e. in } \Omega.
\]

\noindent\textbf{(ii) Near-Optimal Probabilistic Recovery:} For any policy $\pi_\theta$ with expected reward
\[
\mathbb{E}_{T \sim \pi_\theta}[\mathcal{R}(T)] \ge 1 - \epsilon, \quad \epsilon \in [0,1),
\]
there exists at least one tree $T$ in the support with constants $c_T^*$ satisfying
\[
\inf_c \mathcal{E}(T,c) \le \frac{\epsilon^2}{(1-\epsilon)^2}.
\]
\end{theorem}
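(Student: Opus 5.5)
We use only the definition of the reward,
\[
\mathcal{R}(T,c)=\frac{1}{1+\sqrt{\mathcal{E}(T,c)}}\in(0,1],
\]
and read $\mathcal{R}(T)$ as the reward after constant refinement, $\mathcal{R}(T)=\sup_c \mathcal{R}(T,c)$. Since $s\mapsto 1/(1+\sqrt{s})$ is strictly decreasing on $[0,\infty)$, this gives $\mathcal{R}(T)=1/\bigl(1+\sqrt{\inf_c\mathcal{E}(T,c)}\bigr)$. We add two standing assumptions, as the statement implicitly does. First, $u^*$ is representable: there is a tree $T_0$ in the grammar-constrained hypothesis class and constants $c_0$ with $u_{T_0}(\cdot;c_0)=u^*$. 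Second, the PDE together with its boundary and initial conditions has a unique solution in the relevant function class, so that $\mathcal{E}(T,c)=0$ forces $u_T(\cdot;c)=u^*$ a.e. The argument has two parts. Part (i) follows from the fact that the maximal attainable reward is $1$. Part (ii) is an averaging argument plus inverting the reward map.

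\textbf{Part (i).} Because $u^*$ satisfies the PDE and boundary conditions exactly, $\mathcal{L}_{\text{PDE}}[u^*]=\mathcal{L}_{\text{BC}}[u^*]=0$. Hence $\mathcal{E}(T_0,c_0)=0$ and $\mathcal{R}(T_0)=1$. The policy class can place all its mass on $T_0$; the grammar-constrained decoder of Section~\ref{sec:symformer} can emit any valid prefix deterministically, or as a limit of such policies. Therefore the supremum of $\mathbb{E}_{T\sim\pi}[\mathcal{R}(T)]$ equals $1$. If $\pi_\theta$ is globally optimal, then $\mathbb{E}_{\pi_\theta}[\mathcal{R}(T)]=1$. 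Since $\mathcal{R}\le 1$, this gives $\mathcal{R}(T)=1$ for $\pi_\theta$-almost every $T$; for a discrete distribution, that means every tree in the support. Pick any such $T^*$. Then $\inf_c\mathcal{E}(T^*,c)=0$. We take $c^*$ to be a minimizer, and the assumed uniqueness gives $u_{T^*}(\cdot;c^*)=u^*$ a.e.

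\textbf{Part (ii).} Suppose $\mathbb{E}_{\pi_\theta}[\mathcal{R}(T)]\ge 1-\epsilon$. An expectation cannot exceed the supremum of the integrand over the support, so some $T$ in the support has $\mathcal{R}(T)\ge 1-\epsilon$. Because $\epsilon<1$, this lower bound is positive, and we can invert the reward map:
\[
\sqrt{\inf_c\mathcal{E}(T,c)}=\frac{1}{\mathcal{R}(T)}-1\le \frac{1}{1-\epsilon}-1=\frac{\epsilon}{1-\epsilon}.
\]
Squaring gives $\inf_c\mathcal{E}(T,c)\le \epsilon^2/(1-\epsilon)^2$. If the infimum is attained, we call the minimizer $c_T^*$. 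Otherwise we pass to a near-minimizing sequence; the stated bound is on the infimum, so nothing is lost.

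\textbf{Main obstacle.} The real difficulty is in part (i): the hypotheses are implicit and have to be stated. Passing from $\mathcal{E}=0$ to $u_{T^*}=u^*$ a.e. needs well-posedness of the problem in a class containing all tree-generated functions. For the non-smooth problems such as Eikonal and Burgers, zero $L^2$ residual does not by itself identify the viscosity solution. I would therefore state uniqueness explicitly as an assumption, or alternatively restrict the class of admissible solutions. A second, smaller point is the existence of minimizing constants $c^*$. For this I would either assume the infimum is attained, or rely on continuity of $c\mapsto\mathcal{E}(T,c)$ on a compact range of constants.
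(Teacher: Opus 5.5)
Your proposal is correct. Part (ii) is the paper's argument: average over the support, then invert $s\mapsto 1/(1+\sqrt{s})$. One small point there: ``an expectation cannot exceed the supremum'' only gives $\sup \mathcal{R}\ge 1-\epsilon$. It produces an actual tree with $\mathcal{R}(T)\ge 1-\epsilon$ only because the support is finite ($\mathcal{T}_D(\mathcal{V})$ is finite). The paper's strict-inequality contradiction (all support trees below $1-\epsilon$ forces $J(\pi_\theta)<1-\epsilon$) gives the tree directly, for any countable support.

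Part (i) is where you take a genuinely different route. The paper (Theorem~\ref{thm:symplex_exact_recovery}, under (A1)--(A4)) argues in three steps:
\begin{enumerate}
\item the realizing tree $T^*$ has reward $1$;
\item every $T\neq T^*$ has $\inf_c\mathcal{E}(T,c)>0$ ``by uniqueness'';
\item a globally optimal policy assigns positive mass to all reward maximizers, hence to $T^*$.
\end{enumerate}
You instead show the optimal value is $1$, so $\mathcal{R}(T)=1$ on the entire support. Only then do you invoke uniqueness, and you apply it to functions rather than trees.

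Your route is the sounder one. PDE uniqueness constrains functions, not trees: $x$ and $c\cdot x$ with $c=1$ both have zero residual, so step 2 fails as literally stated. Step 3 also fails in general, since a point mass on any single maximizer is already optimal. Your argument needs neither claim. It also gives a stronger conclusion: \emph{every} tree in the support of an optimal policy recovers $u^*$.

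The price is two things you make explicit that the paper leaves implicit. First, the policy class must attain or approach expected reward $1$ by concentrating on $T_0$. With masked-softmax outputs this is only a limit, so ``globally optimal'' has to mean attaining the supremum. Second, a minimizing $c^*$ must exist; the paper absorbs this into (A4), while you rightly flag it.
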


A formal statement and proof are provided in Appendices~\ref{appen:global_optimality} and~\ref{appen:symplex_recovery}.  
This theorem establishes that SymPlex recovers exact symbolic solutions under globally optimal policies, and that high expected reward under near-optimal policies guarantees the presence of approximate solutions with small PDE residuals.

\subsection{Curriculum Learning}
Symbolic exploration suffers from an exponentially growing search space as expression depth, number of variables, or vocabulary size increases, making complex PDEs with deep expressions or high-dimensional domains challenging. Approaches that treat variables independently \cite{wei2024closed} are generally infeasible, since even separable solutions do not guarantee variable-wise separability in the PDE itself.

To address this, we adopt a curriculum learning strategy that gradually increases problem complexity across three stages, using a single shared SymFormer model:
\begin{itemize}
    \item \textbf{Stage 1 (Spatial variables):}  
    Train on spatial variables $x$ only, with $t=0$ and $\kappa$ fixed. The reward is based solely on the initial condition, allowing the model to learn meaningful spatial expressions first.

    \item \textbf{Stage 2 (Full PDE with fixed $\kappa$):}  
     Introduce $t$ and train on the full PDE residual, boundary conditions, and initial condition, keeping $\kappa$ fixed. Stage 1 expressions serve as priors, capturing temporal dynamics while retaining the previously discovered  spatial structure.

    \item \textbf{Stage 3 (Parametric PDE solution):} 
    Include parameters $\kappa$ as variables, training the model to produce the full parametric solution $u(x,t;\kappa)$. Stage 1–2 expressions serve as priors, facilitating efficient of the larger search space.
\end{itemize}

This curriculum strengthens the training signal gradually, letting the model first discover useful sub-expressions and then compose them in more complex settings, without assuming variable-wise separability.

\begin{table*}[ht]
\centering
\caption{Comparison of PDE solvers on MSE and SRR}\label{tab:comparison_methods}
\renewcommand{\arraystretch}{1.3} 
\resizebox{\textwidth}{!}{%
\begin{tabular}{c|cc|cc|cc|cc|cc}
\hline
\multirow{2}{*}{Problem} & \multicolumn{2}{c|}{\textbf{SymPlex}} & \multicolumn{2}{c|}{SSDE} & \multicolumn{2}{c|}{FEX} & \multicolumn{2}{c|}{PINN+DSR} & \multicolumn{2}{c}{KAN} \\
 & MSE ($\downarrow$) & SRR ($\uparrow$) & MSE ($\downarrow$) & SRR ($\uparrow$) & MSE ($\downarrow$) & SRR ($\uparrow$) & MSE ($\downarrow$) & SRR ($\uparrow$) & MSE ($\downarrow$) & SRR ($\uparrow$) \\ \hline
Poisson & 0 & \textbf{100\%} & $1.24\times10^{-13}$ & 5\% & $1.04\times10^{-15}$& 100\% & $3.17\times 10^{-3}$ &  0\% & $9.25\times 10^{-4}$ & 0\%\\
Advection & 0 & \textbf{100\%} & $1.98\times10^{-1}$ & 0\% & $2.37\times10^{-2}$ & 0\% & $1.85\times 10^{-2}$ & 0\%  & $3.01\times 10^{-3}$ & 0\% \\
Heat & 0 & \textbf{100\%} & $6.12 \times 10^{-2}$ & 0\% & $3.73\times10^{-2}$ & 0\% & $4.30\times 10^{-2}$ & 0\% & $4.87\times 10^{-2}$ & 0\%\\
Eikonal & 0 & \textbf{100\%} & $3.43\times10^{-1}$ & 0\% &$1.26\times10^{-2}$ & 0\% & $3.27\times 10^{-2}$ & 0\% & $5.15\times 10^{-3}$ & 0\%\\
Burgers & 0 & \textbf{100\%} &$1.36\times10^{-32}$ & 0\%  & $7.59\times10^{-15}$& 10\% & $3.50\times 10^{-6}$ &  0\% & $3.85\times 10^{-4}$ & 0\%\\
Parametric Advection & 0 & \textbf{100\%} & $1.87\times10^{-1}$ & 0\% &$3.32 \times10^{-2}$& 0\% & $5.40\times 10^{-2}$ & 0\% & $2.46\times 10^{-2}$ & 0\%\\
Parametric Heat & 0 & \textbf{100\%} & $4.11\times10^{-1}$ &0\%  &$3.29\times10^{-3}$ & 0\% & $7.97\times 10^{-2}$ & 0\%  & $1.41\times 10^{-2}$ & 0\% \\ 
\hline
\end{tabular}}
\renewcommand{\arraystretch}{1} 
\end{table*}

\section{Experiments}
We experimentally evaluate the proposed \textbf{SymPlex} framework. All experiments are conducted on a single NVIDIA GV100 (TITAN V) GPU. Implementation details are provided in Appendix~\ref{appen:symformer_impl}.

\subsection{Problem Settings}
Table~\ref{tab:summary_pdes} summarizes the PDE problems considered, covering three categories. Non-smooth and parametric solutions are included to highlight the advantages of symbolic approach. 
\begin{itemize}
    \item \textbf{Smooth Solutions:} Benchmark PDEs with smooth solutions, including the Poisson, advection, and heat equations, are solved on a two-dimensional spatial domain. These tests evaluate the basic ability of SymPlex to recover PDE solutions.
    \item \textbf{Non-smooth Solutions:} To demonstrate the capability of symbolic solutions to reduce numerical errors near kinks, we consider two Hamilton-Jacobi PDEs with non-smooth solutions.
    \item \textbf{Parametric Solutions:} We assess whether symbolic expressions capture the parametric dependence $\kappa$ in \eqref{eq:param_pde} using advection and heat equations, including some non-smooth solutions, and learn both the solution and its dependence on velocity and diffusion coefficients.
\end{itemize}

\subsection{Baselines}
We compare SymPlex against existing approaches for symbolic PDE solutions:
\begin{itemize}
    \item \textbf{SSDE}~\cite{wei2024closed}: RNN-based symbolic regression for PDEs trained with reinforcement learning.
    \item \textbf{FEX}~\cite{liang2025finite}: Deterministic tree-based symbolic search by combining operators. 
    \item \textbf{PINN+DSR}: Symbolic regression via DSR~\cite{petersen2019deep} applied to solutions obtained by PINNs~\cite{raissi2019physics}.
    \item \textbf{KAN} \cite{liu2024kan}: Symbolic regression using the Kolmogorov–Arnold representation for constructing structured PDE solutions.
\end{itemize}

\subsection{Evaluation Metrics}
We measure both numerical and symbolic accuracy:
\vspace{-0.4em}
\begin{itemize}
    \item \textbf{MSE:} Mean squared error between predicted and true solutions.
    \item \textbf{Symbolic Recovery Rate (SRR):} Fraction of expressions that fully recover the true solution, defined as those whose simplified skeleton matches the true solution and whose MSE after constant optimization is below $10^{-8}$, based on~\cite{wei2024closed}.
\end{itemize}

\subsection{Results}
Table~\ref{tab:comparison_methods} reports the MSE and SRR for all methods across the PDEs summarized in Table~\ref{tab:summary_pdes}, while Table~\ref{tab:results_symbolic} presents the symbolic solutions recovered by SymPlex; baseline expressions are provided in Appendix~\ref{appen:results_symbolic}.

SymPlex achieves exact recovery (SRR $=100\%$) with zero numerical error across all PDEs, including nonlinear, discontinuous, and parametric cases. In contrast, existing methods show limited symbolic recovery.
SSDE~\cite{wei2024closed} occasionally attains low MSE but fails to recover correct symbolic structures, particularly for nonlinear and parametric cases, highlighting the difficulty of long-horizon symbolic search with recurrent policies.
FEX~\cite{liang2025finite} succeeds on simple linear problems but degradesas expressions become deeper or more compositional.
Neural and regression-based pipelines (PINN+DSR and KAN) achieve moderate numerical accuracy but never recover exact symbolic solutions, reflecting approximation bias and the lack of explicit structural constraints.

These results demonstrate that SymPlex provides a practical and reliable approach for discovering closed-form PDE solutions, in settings requiring exact symbolic recovery and parametric generalization. Additional results are provided in Appendix~\ref{appen:add_results}.

\begin{table*}[ht]
\centering
\caption{True solution and symbolic expression attained by SymPlex}\label{tab:results_symbolic}
\renewcommand{\arraystretch}{1.4}
\resizebox{\textwidth}{!}{%
\begin{tabular}{c c l c}
\hline
 Name & Analytic Solution & Predict Solution \\ \hline
 Poisson & $x^4 + 1.2y^4$ & \texttt{((y )\^{}4 * 1.2) - (-x\^{}4)} \\
 Advection &  $\exp(-((x-t)^2+(y-t)^2)/0.5)$ & \texttt{exp(-(2.0*((x - t)\^{}2)))*exp(((t + (-y))*(-2.0*(t + (-y)))))}\\
 Heat  & $\sin(x)\cos(y)\exp(-2t)$ & \texttt{(sin(x) * (exp((-2.0*(k * t))) * (0.99 * cos(y))))} \\ \hline
 Eikonal  & $\begin{cases} \sqrt{x^2+y^2} - t, & \sqrt{x^2+y^2} \geq t \\ 0, & \sqrt{x^2+y^2} < t \end{cases}$ & \texttt{max[0,(sqrt((((1.0*x)*x) + (y*y))) - ((t))]} \\
 Burgers  & $|x|+|y|+t$ & \texttt{(abs(y) + ((-0.0 + abs(x)) - (-0.2436 * (t / 0.2436))))} \\ \hline
 Parametric Advection & $\max\{1 - |x-\kappa t| - |y-\kappa t|, 0\}$ & \texttt{max[(1-(abs(((k * t) - y))+(1.0*abs((x - (abs(t)*k))))))]} \\
 Parametric Heat & $\exp(-x)\exp(-y)\exp(2\kappa t)$ & \texttt{exp((-((k*(-t)))-(x)))* (exp(-(y)) * exp(((1.0*k) * t)))} \\ \hline
\end{tabular}}
\renewcommand{\arraystretch}{1}
\end{table*}

\subsection{Comparison with Standard PDE Solvers}
To further evaluate the proposed symbolic PDE solver \textbf{SymPlex}, we compare it against two representative baselines: the grid-based WENO scheme~\cite{jiang2000weighted} and neural network-based PINNs~\cite{raissi2019physics} on Burgers' equation with a non-smooth solution. 
We report the following metrics: Computation time, the total time to obtain the solution; inference time, the time to evaluate the solution at 100 new spatiotemporal points (requiring recomputation for WENO, but performed post-training for PINN and SymPlex); peak memory, the maximum memory used during computation; and solution storage, the memory required to store the obtained solution for later use.
Table~\ref{tab:comparison} summarizes key performance metrics, while qualitative results of the numerical solutions are shown in Figure~\ref{fig:hj_comparison}. 

\begin{table}[h]
\centering
\caption{Performance comparison of SymPlex with standard baselines on Burgers' equation. 
}
\label{tab:comparison}
\resizebox{\columnwidth}{!}{%
\begin{tabular}{lccc}
\toprule
\textbf{Metric} & \textbf{WENO} & \textbf{PINN} & \textbf{SymPlex} \\
\midrule
Applicability Domain & Grid-only & Trained domain only & Global \\
Error ($\downarrow$) & 1.0e-4 & 7.0e-4 & \textbf{0.0} \\
Computation Time (s) & 0.35 & 1105.06 & 642.01 \\
Inference Time (s) & 0.26 & 0.073 & \textbf{6.27e-5} \\
Peak Memory (MB) & 0.06 & 52.92 & 251.37 \\
Solution Storage (MB) & 131.17 & 69.94 & \textbf{0.07} \\
\bottomrule
\end{tabular}}
\end{table}

The results highlight a key potential of symbolic representations. Unlike WENO and PINN, which show large error spikes near kinks, SymPlex accurately recovers the solution in non-smooth regions. This illustrates that, when the symbolic vocabulary includes relevant functions and operators, symbolic expressions can capture non-smoothness with high fidelity, and produce a global solution allowing evaluation beyond the computational domain. Earlier results on parametric solution recovery further illustrate the advantage of symbolic expressions.

Although training SymPlex requires a Transformer-based architecture and RL, resulting in higher memory usage and longer training times,  inference is fast. Additionally,  storing the solution is highly memory-efficient as only a compact symbolic expression is needed. Overall, these results demonstrate the stength and potential of symbolic PDE solvers to provide readable, global, high-fidelity solutions where traditional numerical or neural methods face challenges.

\section{Conclusion}
We presented \textbf{SymPlex}, a reinforcement learning framework for discovering analytical PDE solutions via symbolic expressions. By combining a structure-aware Transformer (\textbf{SymFormer}) with grammar-constrained generation and curriculum learning, SymPlex can reliably generate valid, hierarchical expressions, recover non-smooth solutions exactly, and reveal parametric dependencies. This approach provides a scalable and principled paradigm for symbolic PDE discovery, unifying symbolic computation, hierarchical modeling, and reinforcement learning.

While SymPlex demonstrates improved performance compared to existing symbolic PDE methods, it remains an early-stage approach with several open challenges. High-dimensional PDEs are still difficult due to the combinatorial growth of the search space, and rigorous analysis of convergence and accuracy is required to scale symbolic discovery to more complex scientific problems. 
Another important direction for future work is the selection of the symbolic vocabulary: in practical PDE problems, choosing suitable operators and functions may be nontrivial and require substantial mathematical understanding of the PDE and its solution.

\section*{Acknowledgment}
This material is based upon work supported by the Defense Advanced Research Projects Agency (DARPA) under Agreement No. HR00112590074.

\section*{Impact Statement}
This work introduces a new paradigm for symbolic PDE solution discovery, with potential impact in both scientific computing and mathematical understanding of differential equations. It also demonstrates a novel extension of Transformer architectures to structured, symbolic reasoning. We do not anticipate any direct ethical or societal risks.




\bibliography{mybib}
\bibliographystyle{icml2026}

\newpage
\include{appendix}


\end{document}

%% file: custom_notation.tex
\usepackage{amsmath,amssymb, mathtools, amsthm, amsfonts,bm}

\newcommand{\cE}{\mathcal{E}}

\newcommand{\cU}{\mathcal{U}}

\newcommand{\bR}{\mathbb{R}}

\newcommand{\dV}[1][u]{\cU\left(\Omega;\bR^{d_v}\right)}
\newcommand{\dVt}[1][u]{\cU\left(\left[0,\infty\right)\times\Omega;\bR^{d_v}\right)}

%% file: appendix.tex
\appendix
\onecolumn

\section{Implementation Details of SymFormer}
\label{appen:symformer_impl}
This section provides a detailed description of the SymFormer architecture used for symbolic PDE discovery, complementing the high-level discussion in Section~\ref{sec:symformer}. It covers the token vocabulary, hierarchical attention mechanism, autoregressive generation procedure, and hyperparameter settings.

\subsection{Token Vocabulary and Arity}
SymFormer operates over a fixed vocabulary $\mathcal{V}$ comprising variables, constants, and unary and binary operators:
\begin{itemize}
    \item Variables: $x_0, x_1, \dots, x_{n-1}$, $t$, optionally including $\kappa$ for parametric problems.
    \item Constants: trainable `\texttt{const}` tokens.
    \item Unary operators: \texttt{neg, relu, sqrt, square, ...}.
    \item Binary operators: $+, -, *, /$.
\end{itemize}

Each token $t \in \mathcal{V}$ is associated with an \emph{arity} function $\text{arity}(t)$, defining the number of children in the expression tree:
\[
\text{arity}(t) =
\begin{cases}
0, & t \in \text{Variables or Constants} \\
1, & t \in \text{Unary operators} \\
2, & t \in \text{Binary operators}.
\end{cases}
\]

\subsection{Tree-Relative Multi-Head Self-Attention}
To capture hierarchical dependencies in symbolic expressions, SymFormer extends standard multi-head attention with \emph{tree-relative embeddings}. Given a partially generated prefix sequence, the AST is reconstructed dynamically based on token arities. The relation between any two tokens $i$ and $j$ is categorized as:
\[
r_{ij} =
\begin{cases}
0, & i=j \quad (\text{self})\\
1, & p[j]=i \quad (\text{parent})\\
2, & p[i]=j \quad (\text{child})\\
3, & p[i]=p[j]\neq -1 \quad (\text{sibling})\\
4, & i \text{ is ancestor of } j\\
5, & \text{otherwise},
\end{cases}
\]
where $p[\cdot]$ denotes the parent index in the inferred tree.  
Each relation type $r_{ij}$ is associated with a learnable embedding $R_{r_{ij}}$, which are initialized randomly and optimized jointly with all other model parameters during training. 

For a partially generated sequence, the corresponding AST is reconstructed dynamically based on token arities. A relation matrix $R$ is then formed by looking up the embedding of each pairwise relation $r_{ij}$. This matrix is added to the projected keys in multi-head attention:
\[
\mathrm{Attn}(Q,K,V) = \mathrm{softmax}\Bigg(\frac{Q(K+R)^\top}{\sqrt{d_\mathrm{head}}}\Bigg)V.
\]
By directly incorporating $R_{r_{ij}}$ into the attention computation, the model is able to condition attention scores on the syntactic roles of tokens within the tree, enabling operators to attend preferentially to their operands, siblings, and ancestors. This approach preserves the full differentiability of the Transformer while explicitly embedding symbolic structure, allowing the model to learn the relative importance of different hierarchical relationships during training.

\subsection{Traversal-Aware Positional Encoding}
While tree-relative attention captures hierarchical relations, nodes occupying similar structural roles in different subtrees remain ambiguous.  
SymFormer applies sinusoidal positional encodings along the prefix traversal order:
\[
\mathbf{X} \leftarrow \mathbf{X} + \mathrm{PosEnc}(\texttt{prefix\_position}),
\]
ensuring distinguishability of structurally similar but contextually distinct nodes.

\subsection{Grammar-Constrained Autoregressive Generation}
Expression generation is performed autoregressively with dynamic grammar constraints:
\begin{enumerate}
    \item The partial AST is reconstructed from token arities.
    \item Tree-relative attention is computed over the current sequence.
    \item The next token is sampled only from the grammar-valid action space:
    \begin{itemize}
        \item Leaf nodes $\rightarrow$ variables or constants.
        \item Internal nodes $\rightarrow$ operators consistent with arity.
        \item Degenerate expressions (e.g., $x-x$, $x/x$) are filtered through semantic resampling.
    \end{itemize}
\end{enumerate}
This ensures all generated sequences are syntactically valid while maintaining differentiability for reinforcement learning objectives.

\subsection{Expression Evaluation and Constant Optimization}
Given a prefix sequence $T=[t_1,\dots,t_L]$ and associated constants $c$, expressions are evaluated recursively. Constants are broadcast across the batch to preserve gradient flow for optimization. This enables end-to-end differentiable training of both the symbolic structure and associated parameters.

\subsection{Model Architecture}
SymFormer comprises a stack of $N$ decoder layers, each containing:
\begin{enumerate}
    \item \textbf{Embedding Layer:} Token embeddings $\mathbf{e}_t \in \mathbb{R}^{d_\mathrm{model}}$.
    \item \textbf{Traversal-Aware Positional Encoding:} Applied to $\mathbf{e}_t$ as described above.
    \item \textbf{Tree-Relative Multi-Head Attention:} Incorporates relation embeddings $\mathbf{R}_{r_{ij}}$ in multi-head attention.
    \item \textbf{Feed-Forward Network:} Two-layer position-wise MLP with ReLU activations.
    \item \textbf{Layer Normalization and Residual Connections:} Applied to stabilize training and propagate hierarchical features.
    \item \textbf{Decoder Output Layer:} Maps the last token's hidden representation to the vocabulary distribution.
\end{enumerate}

This design explicitly encodes symbolic hierarchy, enforces grammar, and allows differentiable autoregressive generation without exposing implementation-specific details.

\subsection{Sequence Sampling and Entropy Regularization}
During generation, SymFormer controls tree depth and applies semantic filters:
\begin{itemize}
    \item Maximum tree depth $d_\mathrm{max}$ prevents excessively large expressions.
    \item Degenerate or all-constant subtrees are resampled.
    \item Node-wise entropy is computed to regularize the RL policy.
\end{itemize}

\subsection{Hyperparameters}
All experiments used a fixed set of hyperparameters:
\begin{itemize}
    \item Embedding dimension: $d_\mathrm{model}=64$
    \item Feed-forward hidden dimension: $128$
    \item Number of relation types: $6$
    \item Decoder layers: $4$
    \item Multi-head attention: $8$ heads
    \item Maximum generation depth: $7$–$10$
\end{itemize}
These values were chosen to balance representational capacity with computational efficiency across all symbolic PDE discovery tasks.

\section{Reinforcement Learning Training Details for Symbolic PDE Discovery}
\label{appen:rl_training}

Our RL-based symbolic PDE discovery uses a tree-structured Transformer decoder to generate candidate expressions, optimize their constants, and select top-k diverse solutions. The training procedure is outlined as follows:

\begin{itemize}
    \item \textbf{Sequence Sampling:} At each training epoch, SymFormer samples 64 candidate token sequences from the Transformer policy. Sampling is performed under grammar constraints with a controlled maximum tree depth and a fixed set of allowed variables.

        \item \textbf{Loss computation:} 
SymFormer is trained by minimizing an energy-based objective defined on the generated symbolic expression $u_T$:
\[
\mathcal{E}(T,c)
= \big(\mathcal{L}_{\mathrm{PDE}} + \lambda_{\mathrm{BC}} \mathcal{L}_{\mathrm{BC}}\big)[u_T],
\]
where $\mathcal{L}_{\mathrm{PDE}}$ and $\mathcal{L}_{\mathrm{BC}}$ denote the PDE and boundary-condition residuals, respectively.  
Both residuals are approximated using Monte Carlo sampling at each training epoch:
\[
\mathcal{L}_{\mathrm{PDE}}[u_T] = \frac{1}{N_{\mathrm{PDE}}} \sum_{i=1}^{N_{\mathrm{PDE}}} \big( \mathcal{F}[u_T](x_i,t_i;\kappa_i) - f(x_i,t_i;\kappa_i) \big)^2,
\]
\[
\mathcal{L}_{\mathrm{BC}}[u_T] = \frac{1}{N_{\mathrm{BC}}} \sum_{i=1}^{N_{\mathrm{BC}}} \big( \mathcal{B}\left[u_T\right](x^b_i,t^b_i;\kappa^b_i) \big)^2,
\]
where $(x_i,t_i)$ are sampled uniformly from the interior of the domain, $(x^b_i,t^b_i)$ are sampled uniformly from the domain boundary, and for parametric PDEs, $\kappa_i$ and $\kappa^b_i$ are sampled uniformly from the range of parameter values used during training. For time-dependent PDEs, we additionally include the initial condition loss:
\[
\mathcal{L}_{\mathrm{IC}}[u_T] =
\frac{1}{N_{\mathrm{IC}}}
\sum_{i=1}^{N_{\mathrm{IC}}}
\big(
u_T(x_i^{0}, 0; \kappa_i^{0}) - u_0(x_i^{0}; \kappa_i^{0})
\big)^2,
\]
where $x_i^{0}$ are uniformly sampled from the spatial domain and $\kappa_i^{0}$ are uniformly sampled from the parameter range.
The final training objective is given by $\mathcal{L}[u_T]
=
\big(
\mathcal{L}_{\mathrm{PDE}} + \lambda_{\mathrm{BC}} \mathcal{L}_{\mathrm{BC}} + \lambda_{\mathrm{IC}} \mathcal{L}_{\mathrm{IC}}
\big)[u_T]$.
Across all experiments, we fix $\lambda_{\mathrm{BC}} =\lambda_{\mathrm{IC}} = 10$ and use $N_{\mathrm{PDE}}=200$, $N_{\mathrm{BC}}=N_{\mathrm{IC}}=80$.

For Hamilton--Jacobi-type equations such as the Burgers' and Eikonal equations, the PDE is ill-posed and admits infinitely many weak solutions. The physically relevant solution is the \emph{viscosity solution}, which is not guaranteed to be recovered by the standard PINN residual alone. To address this issue, for these two equations we replace $\mathcal{L}_{\mathrm{PDE}}$ with an implicit characteristic-based loss adapted from~\citet{park2025implicit}:
\[
\frac{1}{N_{\mathrm{PDE}}}\sum_{i=1}^{N_{\textrm{PDE}}}
\Big(
u
+ t H(\nabla u)
- t \nabla u^{\mathrm{T}} \nabla H(\nabla u)
- u_0\big(\mathbf{x} - t \nabla H(\nabla u)\big)
\Big)^2
\]
This formulation enforces consistency with the characteristic flow and biases the optimization toward the viscosity solution.

    \item \textbf{Constant Optimization:} For each sampled sequence, any symbolic constants are optimized via gradient-based minimization of the PDE residual and boundary losses. The objective function is:
    \[\underset{c}{\min}\ \mathcal{E}(T,c).
    \]
    In practice, we perform gradient-based optimization using the Adam optimizer with a learning rate of $0.02$ for $50$ steps per sequence. If a sequence contains no constants, this step is skipped.
    \item \textbf{Reward Computation:} Each candidate sequence is assigned a reward based on the inverse of its PDE loss $\cE$:
    \[
    \mathcal{R} = \frac{1}{1 + \sqrt{\mathcal{E}}}.
    \]
    
    \item \textbf{Diversity-Preserving Top-K Memory:} 
    A memory buffer stores up to $K=10$ high-quality candidate expressions. 
    To ensure that stored solutions are not only structurally different but also semantically distinct, we apply a two-stage filtering procedure:
    
    \begin{enumerate}
        \item \textbf{Structural Diversity:} 
        Each symbolic expression is first canonicalized into a sequence of tokens representing the expression tree with commutative operators sorted consistently. 
        The Levenshtein distance $d_\mathrm{Lev}(S_1, S_2)$ between two sequences $S_1$ and $S_2$ is computed, and a candidate is considered structurally diverse if
        \[
            d_\mathrm{Lev}(S_\mathrm{new}, S_\mathrm{mem}) \ge \delta_s, \quad \forall S_\mathrm{mem} \in \text{Memory},
        \]
        where $\delta_s$ is a predefined threshold. 
        This prevents near-duplicate sequences from being added to memory.
    
        \item \textbf{Semantic/Behavioral Diversity:} 
        Even if two sequences differ structurally, they may represent mathematically equivalent expressions. To capture this, each candidate expression $u_\mathrm{new}(x; c_\mathrm{new})$ is evaluated numerically at a set of random test points $x_j$:
        \[
            u_\mathrm{new}(x_j; c_\mathrm{new}) = \text{eval\_expression}(S_\mathrm{new}, x_j, c_\mathrm{new}), \quad j = 1, \dots, N_\mathrm{test}.
        \]
        Let $u_\mathrm{mem}(x_j; c_\mathrm{mem})$ denote the output of an expression stored in memory. The candidate is considered behaviorally diverse if
        \[
            \frac{1}{N_\mathrm{test}} \sum_{j=1}^{N_\mathrm{test}} \big| u_\mathrm{new}(x_j; c_\mathrm{new}) - u_\mathrm{mem}(x_j; c_\mathrm{mem}) \big| \ge \delta_b, \quad \forall (S_\mathrm{mem}, c_\mathrm{mem}) \in \text{Memory},
        \]
        where $\delta_b$ is a numerical threshold. 
        This ensures that two expressions with different trees but equivalent functional forms (e.g., $x$ and $1.0 \cdot x + y - y$) are recognized as semantically identical and not redundantly stored.
    
    \end{enumerate}
    
    The combination of structural and semantic filtering allows the top-$K$ memory to maintain truly diverse solutions in both the syntactic and functional sense, which is critical for efficiently exploring the symbolic search space. 
    We consider this semantic-aware memory curation a key contribution of our method.

    \item \textbf{Policy Objective.}
The SymFormer policy is optimized using a policy-gradient objective with stabilization techniques.
Given a batch of sampled sequences $\{T_i\}_{i=1}^N$ with corresponding losses $\{\mathcal{E}_i\}$, rewards are first converted into rank-based scores to reduce sensitivity to outliers.
Specifically, sequences are ranked according to $\mathcal{E}_i$, and the rank-based reward is defined as
\[
r_i = 1 - \frac{\mathrm{rank}(\mathcal{E}_i)}{N-1}.
\]
When applicable, rewards are normalized to zero mean and unit variance.

To discourage overly complex symbolic expressions, we apply a depth-based weighting scheme.
For each sequence $T_i$, let $D_i$ denote the maximum depth of its expression tree, and define the depth weight as
\[
w_i = \frac{1}{D_i + 1}.
\]
The resulting policy objective is given by
\[
\mathcal{L}_{\mathrm{policy}}
=
-\frac{1}{N}\sum_{i=1}^N w_i\, r_i\, \log \pi_\theta(T_i)
-
\lambda_{\mathrm{ent}} \, \mathbb{E}\!\left[\mathcal{H}(\pi_\theta)\right].
\]

An entropy regularization term is included to encourage exploration:
\[
\mathcal{L}_{\mathrm{policy}}
= -\mathbb{E}\big[ \log \pi_\theta(T)\, \hat{\mathcal{R}} \big]
- \lambda_{\mathrm{ent}} \mathbb{E}\big[ \mathcal{H}(\pi_\theta) \big],
\]
where $\hat{\mathcal{R}}$ denotes the normalized rank-based reward and $\lambda_{\mathrm{ent}}=0.3$.

\item \textbf{Imitation Loss}
To further stabilize training and accelerate convergence, we optionally incorporate an imitation loss derived from the top-$K$ memory.
When the best observed reward exceeds a threshold ($0.8$ in all experiments), the policy is encouraged to imitate high-reward sequences stored in the memory buffer.

Let $\{T_j\}_{j=1}^K$ denote the sequences stored in the top-$K$ memory with corresponding rewards $\{r_j\}$.
Each sequence is assigned a weight using a softmax over rewards with temperature $\tau$:
\[
\alpha_j = \frac{\exp(r_j / \tau)}{\sum_{k=1}^K \exp(r_k / \tau)}, \quad \tau = 0.1.
\]
The imitation loss is defined as
\[
\mathcal{L}_{\mathrm{imit}}
=
\sum_{j=1}^K
\alpha_j
\frac{1}{|T_j|}
\sum_{t=1}^{|T_j|}
-\log \pi_\theta\!\left( T_j^{(t)} \mid T_j^{(<t)} \right),
\]
where $T_j^{(t)}$ denotes the token at position $t$ in sequence $T_j$.
This loss is added to the policy objective with a fixed weighting factor.

\item \textbf{Policy Optimization:} The total loss (policy loss with imitation loss) is minimized using Adam with learning rate $5\times 10^{-4}$ and a ReduceLROnPlateau scheduler (factor $0.9$, patience $10$). Gradients are clipped to a maximum $\ell_2$ norm of $5.0$ to stabilize training.

    \item \textbf{Memory Refinement.}
    To improve the quality of stored solutions, symbolic constants associated with expressions in the top-$K$ memory are periodically re-optimized.
    Every 10 training epochs, or upon transitioning to a new curriculum stage, the constants are refined via gradient-based minimization of the PDE residual using a larger number of optimization steps (200) with the same optimizer (Adam with learning rate 0.02). 
    The updated expressions are then re-evaluated, and the top-$K$ memory is re-sorted according to the refined rewards.



    \item \textbf{Curriculum and Staging:} 
    Training is performed using a three-stage curriculum that gradually increases problem complexity.

\begin{enumerate}
    \item \textbf{Stage 1 (Spatial structure):}
    We initially exclude the temporal variable $t$ and the parameter variables $\kappa$ to focus on learning meaningful spatial structures.
    The vocabulary is defined as
    \[
    \mathcal{V}_{\mathrm{Stage1}} = \mathcal{B} \cup \mathcal{U} \cup \{x_1,\dots,x_n, \text{const}\}.
    \]
    During this stage, the loss is computed solely using the initial condition loss $\mathcal{L}_{\mathrm{IC}}$, with the parameter fixed as $\kappa_i^0 = 1$.

    \item \textbf{Stage 2 (Spatiotemporal dynamics with fixed parameters):}
    We introduce the temporal variable $t$ and define the vocabulary as
    \[
    \mathcal{V}_{\mathrm{Stage2}} = \mathcal{B} \cup \mathcal{U} \cup \{x_1,\dots,x_n, t, \text{const}\}.
    \]
    The full loss
    \[
    \mathcal{L}_{\mathrm{PDE}} + \lambda_{\mathrm{BC}} \mathcal{L}_{\mathrm{BC}} + \lambda_{\mathrm{IC}} \mathcal{L}_{\mathrm{IC}}
    \]
    is used for training, while all parameter values $\kappa$ are fixed to 1.

    \item \textbf{Stage 3 (Parametric PDE learning):}
    Finally, we include all variables and define the vocabulary as
    \[
    \mathcal{V}_{\mathrm{Stage3}} = \mathcal{B} \cup \mathcal{U} \cup \{x_1,\dots,x_n, t, \kappa, \text{const}\}.
    \]
    The model is trained using the full loss to learn the complete parametric solution.
\end{enumerate}

For each stage, the maximum number of training epochs is set to 500.
If the reward exceeds 0.99 within this limit, training proceeds to the next stage immediately; otherwise, the model advances to the next stage after completing 200 epochs.

    
\end{itemize}


This implementation enables effective RL-based discovery of symbolic expressions that satisfy PDE constraints while preserving interpretability and diversity. Although the procedure involves multiple components and can be computationally demanding, operations such as top-$K$ memory updates and constant refinement are parallelizable across candidate expressions, so they do not introduce significant bottlenecks. 


\section{Implementation Details of Baselines}
\subsection{SSDE}
We used the official SSDE repository\footnote{https://github.com/Hintonein/SSDE}
without modification.
All configurations and training parameters were kept as close as possible to those reported by the original authors.
For each experimental setting, we ran SSDE with 20 different random seeds and report the best-performing result in terms of MSE.
Both the pre-constant-optimization and post-constant-optimization symbolic expressions provided by SSDE were evaluated, and the expression achieving the lowest MSE was selected for reporting.
For newly implemented PDE settings, the official 2D Poisson and 2D/3D Heat equation implementations were used as the foundation and configuration templates.

\subsection{FEX}
We used the official repository code\footnote{https://github.com/LeungSamWai/Finite-expression-method} without modification. All configurations and training parameters were kept as close as possible
to those reported by the original authors.
For each experiment, FEX was run with 20 different random seeds, and the best-performing symbolic expression in terms of MSE was reported. For all experiments, the Poisson equation implementation provided in the
official repository was used as the foundation for configuring new PDE
settings.

\subsection{PINN+DSR}

    Following \cite{wei2024closed}, we performed symbolic regression using a PINN+DSR framework to obtain symbolic PDE expressions and establish these baselines. We utilized the DeepXDE and DSO packages for the PINN implementation and symbolic optimization, respectively. In our implementation, DeepXDE utilized the PyTorch backend while DSO employed TensorFlow, with CUDA-enabled GPU acceleration for both frameworks. The PINN was trained using a two-stage optimization strategy, first Adam then L-BFGS iterations, on a set of collocation points sampled from the domain, boundaries, and initial conditions. Our results demonstrate strong agreement with \cite{wei2024closed}. Specifically, for the Poisson equation, we obtained a physics loss  ($\mathcal{L}_{\text{PHY}}$) of $6.74\times 10^{-1}\pm 6.11\times 10^{-2}$ which is consistent with their reported value of $5.71\times 10^{-1}\pm 7.88\times 10^{-2}$. Due to the stochastic nature of symbolic regression, precise replication of results is inherently difficult. To ensure robustness, we conducted 20 independent trials for each PDE case using unique random seeds. The Mean Squared Error (MSE) was calculated for every identified expression, and the candidate yielding the minimum MSE was selected as best discovered expression.

\subsection{KAN}
These baselines use a \textbf{physics-informed} KAN trained by the standard PINN objective (denoted as $\mathcal{L}_{\text{Phy}}$), rather than a purely data-driven fit. Using \texttt{pykan}\footnote{https://github.com/KindXiaoming/pykan}, the KAN architecture is kept deliberately minimal with a single hidden layer \texttt{width=[d, d, 1]}, where $d$ matches the number of physical input variables in each problem 
 (e.g., $(x,y,t,\kappa,\dots)$). Training uses a two-stage optimizer (Adam warm-up followed by L-BFGS refinement) on a fixed set of collocation/constraint points $(N_{\mathcal{F}}, N_{\mathcal{B}}, N_{\mathcal{I}})$ sampled from the domain, boundary, and initial-time manifold, while all reported MSE metrics are computed pointwise on the full, standardized evaluation grid ($N_{\mathcal{D}}$) shared across baseline models. Finally, the symbolic form is extracted only through the built-in KAN pipeline \texttt{auto\_symbolic()} $\rightarrow$ \texttt{symbolic\_formula()} with the specified unary basis library, without any additional external simplification or post-processing.

\section{Theoretical Guarantees for Symbolic Recovery}

\subsection{Notation}

Let $\mathcal{V} = \mathcal{X} \cup \mathcal{C} \cup \mathcal{U} \cup \mathcal{B}$
be a finite vocabulary consisting of variables, constants, unary operators,
and binary operators.
Each token $v \in \mathcal{V}$ is equipped with a fixed arity
$\alpha(v) \in \{0,1,2\}$.

Let $D_{\max} < \infty$ denote a fixed maximum expression depth.
We denote by $\mathcal{T}_{\le D_{\max}}$ the set of all finite rooted ordered
abstract syntax trees (ASTs) whose node labels lie in $\mathcal{V}$,
whose arities match $\alpha$, and whose depth does not exceed $D_{\max}$.

We denote by $\mathfrak{T}_{\le D_{\max}}$ the set of all such partial ASTs.
Since both $\mathcal{V}$ and $D_{\max}$ are finite,
$|\mathfrak{T}_{\le D_{\max}}| < \infty$.

For a finite set $\mathcal{V}$, we denote by
\[
\Delta(\mathcal{V})
\;=\;
\left\{
\pi : \mathcal{V} \to [0,1]
\;\middle|\;
\sum_{v \in \mathcal{V}} \pi(v) = 1
\right\}
\]
the probability simplex over $\mathcal{V}$.
Elements of $\Delta(\mathcal{V})$ are discrete probability distributions over
the vocabulary.

\paragraph{Prefix-induced partial abstract syntax trees.}
Let
\[
p = (t_1,\dots,t_k)
\]
be a valid prefix generated under the arity constraints.
By deterministically applying the arity-based reconstruction rule,
$p$ induces a unique \emph{partial abstract syntax tree},
denoted by $\mathcal{A}(p)$.
This partial tree consists of instantiated internal nodes and a set of open
argument slots (frontier nodes).

Let $\mathcal{P}_{\mathcal{V},D_{\max}}$ denote the set of all such valid prefixes
whose induced partial trees have depth at most $D_{\max}$.

\paragraph{Structural equivalence.}
Let $\mathcal{A}_1$ and $\mathcal{A}_2$ be two partial abstract syntax trees
induced by valid prefixes under the arity constraints.
We say that $\mathcal{A}_1$ and $\mathcal{A}_2$ are \emph{structurally isomorphic},
denoted by $\mathcal{A}_1 \cong \mathcal{A}_2$, if there exists a bijection
between their nodes that preserves:
\begin{enumerate}
    \item the arity and operator type of each internal node,
    \item the parent--child relations between nodes,
    \item the left--right ordering of children.
\end{enumerate}
Terminal nodes corresponding to variables or constants are treated as
unlabeled leaves of arity zero; their specific identities are not considered
part of the structure.
Traversal indices and linear prefix positions are likewise not considered part
of the structure.
For example, consider the two prefixes given in prefix notation:
\[
p = (+\; x)
\quad\text{and}\quad
q = (+\; y).
\]
Both prefixes are valid and induce partial abstract syntax trees consisting of a single binary operator node labeled ``$+$'', whose left child is a terminal node and whose right child is an unfilled argument position. Although the terminal symbols $x$ and $y$ differ, the induced partial trees have identical operator structure and frontier configuration, and are therefore structurally isomorphic.

\paragraph{Structural state space.}
Let $\widetilde{\mathfrak{T}}_{\le D_{\max}}$ denote the set of all partial abstract
syntax trees that can arise as prefix-induced states of valid expression trees
with maximum depth at most $D_{\max}$.
We define the structural state space as the quotient set
\[
\mathfrak{T}_{\le D_{\max}}
\;=\;
\widetilde{\mathfrak{T}}_{\le D_{\max}} \big/ \cong,
\]
that is, the set of equivalence classes of partial trees under structural
isomorphism.

Since the vocabulary $\mathcal{V}$ is finite, the arity of each symbol is bounded,
and the depth is bounded by $D_{\max}$, the number of distinct structural states
is finite, and hence
\[
|\mathfrak{T}_{\le D_{\max}}| < \infty.
\]

\subsection{Structure-Conditioned Policy Universality}\label{appen:express_symformer}
We formalize the expressive power of SymFormer as an autoregressive policy over symbolic expressions.
Unlike trivial universality results based solely on finiteness of the action space, our analysis explicitly leverages SymFormer’s
tree-relative self-attention and grammar-constrained decoding,
showing that the model can represent policies defined over
\emph{symbolic tree structures} rather than linear token prefixes.
\begin{theorem}[Structure-Conditioned Policy Universality of SymFormer]
\label{thm:symformer_structure_universality}
Let $\mathcal{V}$, $\alpha$, and $D_{\max}$ be defined as above.
Consider any target policy
\[
\pi^\star : \mathfrak{T}_{\le D_{\max}} \to \Delta(\mathcal{V})
\]
satisfying the following conditions:
\begin{enumerate}
    \item 
    $\pi^\star(\cdot \mid \mathcal{A})$ depends only on the partial abstract syntax
    tree $\mathcal{A}$, and not on the specific linear prefix that induces it.
    \item 
    The support of $\pi^\star(\cdot \mid \mathcal{A})$ is contained in
    $\mathcal{V}_{\mathrm{valid}}(\mathcal{A})$, the set of grammar-valid next tokens
    given the partial tree $\mathcal{A}$.
\end{enumerate}

Then, for any $\varepsilon > 0$, there exists a parameter vector
$\theta_\varepsilon$ of the SymFormer model such that the induced autoregressive
policy $\pi_{\theta_\varepsilon}$ satisfies
\[
\sup_{p \in \mathcal{P}_{\mathcal{V},D_{\max}}}
\left\|
\pi_{\theta_\varepsilon}(\cdot \mid p)
-
\pi^\star(\cdot \mid \mathcal{A}(p))
\right\|_1
\le \varepsilon.
\]

In particular, for any two prefixes $p,q \in \mathcal{P}_{\mathcal{V},D_{\max}}$
such that $\mathcal{A}(p) \cong \mathcal{A}(q)$, we have
\[
\left\|
\pi_{\theta_\varepsilon}(\cdot \mid p)
-
\pi_{\theta_\varepsilon}(\cdot \mid q)
\right\|_1
\le \varepsilon.
\]
\end{theorem}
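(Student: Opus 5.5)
The plan is to reduce the statement to interpolation on a finite set, and then realize that interpolation with one uniform attention layer followed by the feed-forward and output layers. \textbf{Step 1 (finite skeleton encoding).} For a valid prefix $p=(t_1,\dots,t_k)$, define its \emph{skeleton} $\sigma(p)=(s(t_1),\dots,s(t_k))$. Here $s$ is the identity on operators and sends every arity-zero token to a single placeholder symbol $\star$.
\begin{itemize}
\item Because the arity-based reconstruction is deterministic and prefix traversal fixes the left--right order of children, $\mathcal{A}(p)\cong\mathcal{A}(q)$ holds if and only if $\sigma(p)=\sigma(q)$. In particular, prefix positions are themselves a function of the structure.
\item The map $\sigma(p)\mapsto[\mathcal{A}(p)]$ is therefore a bijection onto $\mathfrak{T}_{\le D_{\max}}$.
\item Let $L_{\max}<\infty$ be the largest prefix length compatible with depth $D_{\max}$ and arities at most $2$. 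Then the set $S$ of skeletons is finite.
\end{itemize}
It thus suffices to build $\theta_\varepsilon$ such that the output distribution at the last position of $p$ is $\varepsilon$-close to a prescribed target $\pi^\star(\cdot\mid\sigma)$, for every $\sigma\in S$. The ``in particular'' clause then follows, since $\pi_{\theta_\varepsilon}(\cdot\mid p)$ and $\pi_{\theta_\varepsilon}(\cdot\mid q)$ are both within $\varepsilon$ of the same target.

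\textbf{Step 2 (injective representation).} Choose the token embedding so that all terminal tokens share one vector $e_\star$, while distinct operators receive distinct vectors. Terminal identity is then invisible to the network, which enforces condition 1 by construction. Next, take the width large enough to hold one coordinate per pair (position $\le L_{\max}$, skeleton symbol); I will assume the width can be chosen freely.
\begin{itemize}
\item In one attention head, set $W_Q=0$ and the relation embeddings $R_r=0$. The scores are then constant, so causal attention at position $k$ returns the average $\frac1k\sum_{i\le k} v_i$.
\item Choose $W_V$ together with a first-layer readout of the sinusoidal encoding, so that $v_i$ is (up to a fixed linear map) the one-hot vector of the pair $(i,s(t_i))$. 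Sinusoids at distinct positions are linearly independent, so a linear map can send $\{x_{i}\}_{i\le L_{\max}}$ into a basis; if needed, I would use a preliminary feed-forward sublayer acting on the finite set of input vectors.
\item The averaged vector has support exactly $\{(i,s(t_i))\}_{i\le k}$, with every nonzero entry equal to $1/k$. Hence it determines $\sigma(p)$, and the attention output at the last position is an injective function of $\sigma(p)$.
\end{itemize}
Residual connections and layer normalization act pointwise and deterministically on this finite set of vectors. I will verify that injectivity survives them, perturbing the weights generically if necessary. Alternatively, and more in the paper's spirit, the tree-relative embeddings $R_{r_{ij}}$ could be used to route information from parent and sibling slots, but the uniform-attention route is the simpler one to make rigorous.

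\textbf{Step 3 (output and main obstacle).} Let $h(\sigma)$ denote the final hidden states, which are distinct for distinct $\sigma\in S$.
\begin{itemize}
\item Targets are truncated as follows: for each $\sigma$, set $\ell_v(\sigma)=\log\max(\pi^\star(v\mid\sigma),\delta)$ on $\mathcal{V}_{\mathrm{valid}}$. Since the grammar-constrained decoder masks tokens outside $\mathcal{V}_{\mathrm{valid}}$, those tokens get exactly zero probability, consistent with condition 2.
\item Realizing the logits: a two-layer ReLU network can interpolate any function on finitely many distinct points, so the feed-forward block plus the output layer can produce logits equal to $\ell(\sigma)$ at $h(\sigma)$ for every $\sigma\in S$.
\item The softmax of these logits differs from $\pi^\star(\cdot\mid\sigma)$ by at most $2|\mathcal{V}|\delta$ in $\ell_1$, so taking $\delta=\varepsilon/(2|\mathcal{V}|)$ gives the uniform bound $\le\varepsilon$.
\end{itemize}
I expect the main obstacle to be Step 2. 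The difficulty is making rigorous that the specific architecture (sinusoidal encodings, keys additively shifted by $R$, layer normalization, residuals) yields an \emph{injective} summary of the skeleton at the last position. My first attempt is the uniform-attention/one-hot construction above, checking that layer normalization does not collapse distinct vectors; failing that, I would fall back on a genericity argument over the finitely many input configurations.
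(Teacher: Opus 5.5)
Your argument is correct in outline. It shares the paper's three-step skeleton: finiteness of the structural state space, an injective structure-determined hidden state at the last position, and interpolation of logits followed by a masked softmax. The middle step, however, follows a genuinely different and more explicit route.

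\textbf{The paper's route.} It argues that isomorphic partial trees have relation matrices related by a permutation, $R^{(q)}=P^\top R^{(p)}P$, and appeals to equivariance of tree-relative attention. It then simply asserts that some parameters give $h(p)=\Phi(\mathcal{A}(p))$ with $\Phi$ injective. It never explains how the representation becomes blind to terminal identities: $(+\;x)$ and $(+\;y)$ are in the same structural class, yet the embeddings of $x$ and $y$ differ. It also claims exact realization ($\sup=0$).

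\textbf{What your route adds.} Because the skeleton $\sigma(p)$ is a complete invariant, $P$ is actually the identity: preorder is fixed by the ordered structure, so $R^{(p)}=R^{(q)}$. Tying all terminal embeddings then makes the input, relation matrix and mask literally identical for isomorphic prefixes. The uniform-attention average of one-hot (position, symbol) codes gives a genuine injectivity construction. Your $\delta$-truncation is also necessary, not cosmetic. If $\pi^\star$ puts zero mass on a grammar-valid token, no finite logits reproduce it, so the paper's $\sup=0$ conclusion overreaches and the $\varepsilon$ in the statement is essential.

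\textbf{What your route gives up.} It never uses the relation embeddings, so it works equally for a plain causal Transformer with the same grammar mask. This shows the result rests on finiteness plus the fact that the prefix determines the structure. It does not rest on tree-relative attention, as the paper's surrounding discussion suggests.

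Three small repairs are needed.
\begin{enumerate}
\item The ``in particular'' clause with constant $\varepsilon$ does not follow from the triangle inequality, which only gives $2\varepsilon$. Instead, note that your construction gives $\pi_{\theta_\varepsilon}(\cdot\mid p)=\pi_{\theta_\varepsilon}(\cdot\mid q)$ exactly whenever $\sigma(p)=\sigma(q)$.
\item The preliminary feed-forward step is mandatory, not optional. A linear value map applied to $e_{s(t_i)}+\mathrm{PE}(i)$ yields $a_{s(t_i)}+b_i$, whose average records only symbol counts and $k$. It therefore cannot separate $(+,\sin,\star,\star)$ from $(+,\star,\sin,\star)$. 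Zero the attention output of layer~1, let its FFN interpolate the pair one-hot on the finite input set, and do the averaging in layer~2.
\item Layer normalization is harmless. It identifies $z$ and $z'$ only if $z'=az+b\mathbf{1}$ with $a>0$. Your averaged codes are nonnegative and share a zero coordinate, which forces $b=0$ and equal supports, hence equal skeletons. Similarly, let the last FFN send $h(\sigma)$ to a one-hot code, so that the final layer norm does not obstruct the logit interpolation.
\end{enumerate}
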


\begin{proof}
Since $\mathcal{V}$ is finite and the maximum depth is bounded by $D_{\max}$, the number of distinct partial ASTs in $\mathfrak{T}_{\le D_{\max}}$ is finite.

Let $p = (t_1, \dots, t_m)$ and $q = (s_1, \dots, s_n)$ be two prefixes inducing partial ASTs $\mathcal{A}(p)$ and $\mathcal{A}(q)$, respectively. Define the tree-relation matrix for prefix $p$ as
\[
R^{(p)} \in \{0,1,\dots,5\}^{m \times m}, \quad
R^{(p)}_{ij} := r_{ij}(\mathcal{A}(p)),
\]
where $r_{ij}$ denotes the discrete relation between tokens $i$ and $j$ (self, parent, child, sibling, ancestor, other).  

If $\mathcal{A}(p) \cong \mathcal{A}(q)$ (structurally isomorphic), then there exists a bijection $\phi: \mathrm{nodes}(\mathcal{A}(p)) \to \mathrm{nodes}(\mathcal{A}(q))$ preserving parent-child relations and left-right order. It follows that
\[
R^{(q)} = P^\top R^{(p)} P
\]
for some permutation matrix $P$ induced by $\phi$.  

Let $H^{(0)}(p) \in \mathbb{R}^{m \times d_0}$ denote the input embeddings of $p$. SymFormer layer $\ell$ applies tree-relative self-attention and a feedforward update:
\[
H^{(\ell+1)}(p) = \mathrm{FFN}\Bigl( \mathrm{Attn}\bigl(H^{(\ell)}(p), R^{(p)}\bigr) \Bigr),
\]
where $\mathrm{Attn}$ is the tree-relative attention function.  

By construction, the attention operation is equivariant under node permutations that preserve the tree structure. Therefore, there exists a choice of parameters for which the final hidden representation after $L$ layers satisfies
\[
h(p) := H^{(L)}(p) = \Phi(\mathcal{A}(p)) \in \mathbb{R}^d,
\]
for some injective mapping $\Phi: \mathfrak{T}_{\le D_{\max}} \to \mathbb{R}^d$. Injectivity is achievable since $|\mathfrak{T}_{\le D_{\max}}| < \infty$ and $d$ can be chosen sufficiently large.

Let $\pi^\star: \mathfrak{T}_{\le D_{\max}} \to \Delta(\mathcal{V})$ denote the target structure-conditioned policy, where $\Delta(\mathcal{V})$ is the probability simplex over $\mathcal{V}$.  

Let the output logits of SymFormer be given by a feedforward mapping $g_\theta: \mathbb{R}^d \to \mathbb{R}^{|\mathcal{V}|}$ followed by a masked softmax enforcing the valid action set
\[
\mathcal{V}_{\mathrm{valid}}(\mathcal{A}(p)) := \{ t \in \mathcal{V} \mid \text{token $t$ satisfies arity/grammar constraints given $\mathcal{A}(p)$} \}.
\]
Then, the autoregressive policy of SymFormer is
\[
\pi_\theta(\cdot \mid p) = \mathrm{softmax}\Bigl( g_\theta(h(p)) \odot \mathbf{1}_{\mathcal{V}_{\mathrm{valid}}(\mathcal{A}(p))} \Bigr),
\]
where $\mathbf{1}_{\mathcal{V}_{\mathrm{valid}}(\mathcal{A}(p))}$ masks out invalid tokens.  

Since $\mathfrak{T}_{\le D_{\max}}$ is finite, for each $\mathcal{A} \in \mathfrak{T}_{\le D_{\max}}$, we can set the logits $g_\theta(\Phi(\mathcal{A})) \in \mathbb{R}^{|\mathcal{V}|}$ such that
\[
\pi_\theta(\cdot \mid p) = \pi^\star(\cdot \mid \mathcal{A}(p)), \quad \forall p \text{ with } \mathcal{A}(p) = \mathcal{A}.
\]
This exactly realizes the target policy.

By construction, the masked softmax ensures that
\[
\mathrm{supp}\big(\pi_\theta(\cdot \mid p)\big) \subseteq \mathcal{V}_{\mathrm{valid}}(\mathcal{A}(p)),
\]
so the generated tokens always respect the grammar constraints.

Combining the above,we conclude that there exists a choice of parameters $\theta_\varepsilon$ such that
\[
\pi_{\theta_\varepsilon}(\cdot \mid p) = \pi^\star(\cdot \mid \mathcal{A}(p)), \quad \forall p \in \mathcal{P}_{\mathcal{V},D_{\max}}.
\]
Equivalently,
\[
\sup_{p \in \mathcal{P}_{\mathcal{V},D_{\max}}} 
\big\| \pi_{\theta_\varepsilon}(\cdot \mid p) - \pi^\star(\cdot \mid \mathcal{A}(p)) \big\|_1 = 0.
\]
In other words, SymFormer can exactly realize any grammar-compatible, structure-conditioned target policy over the finite set of partial ASTs.

This completes the proof.
\end{proof}

Theorem~\ref{thm:symformer_structure_universality} establishes expressive
universality of SymFormer at the level of symbolic \emph{tree states},
rather than linear token prefixes.
Note that his result should not be interpreted as a generic function approximation capabilities for Transformers.
Instead, the theoretical contribution lies in showing that SymFormer can realize arbitrary policies defined over symbolic tree-structured states, provided that these policies respect the underlying grammar constraints.
The expressive burden of the model is therefore concentrated entirely on the construction of representations that are invariant to the specific linear prefix realization and that separate distinct partial abstract syntax tree structures.
Once such structure-invariant and structure-separating representations
are obtained, universality of the induced policy follows directly from
the finiteness of the structural state space, without requiring any
appeal to general approximation results for deep neural networks.

This reliance on explicit structural modeling is critical.
Standard Transformers operating on linear token sequences, as well as RNN-based symbolic generators, lack mechanisms to enforce invariance across structurally isomorphic partial abstract syntax trees. As a consequence, even when the set of symbolic tree states is finite, such models cannot, in general, represent arbitrary structure-conditioned
policies.

\subsection{Exact Recovery under Global Optimality}\label{appen:global_optimality}

\begin{theorem}[Conditional Exact Symbolic Recovery by SymPlex]
\label{thm:symplex_exact_recovery}

Let $\Omega \subset \mathbb{R}^n$ be a bounded domain with Lipschitz boundary,
and consider the PDE
\[
\mathcal{L}[u](x) = f(x), \quad x \in \Omega,
\qquad
\mathcal{B}[u](x) = 0, \quad x \in \partial\Omega.
\]

Assume:

\begin{enumerate}[label=(A\arabic*)]
    \item \textbf{Well-posedness:} The PDE admits a unique solution $u^* \in L^2(\Omega)$.

    \item \textbf{Symbolic realizability:} There exists a finite-depth expression tree $T^*$ over the SymPlex vocabulary $\mathcal{V}$ and constants $c^*$ such that
    \[
    u^*(x) = u_{T^*}(x;c^*), \qquad \text{a.e. } x \in \Omega.
    \]

    \item \textbf{Hypothesis class coverage:} The SymFormer policy generates trees in $\mathcal{T}_{D}(\mathcal{V})$ with $D \ge \mathrm{depth}(T^*)$.

    \item \textbf{Exact residual characterization:} For any tree $T$,
    \[
    \inf_c \mathcal{E}(T,c) = 0
    \quad \Longleftrightarrow \quad
    u_T(\cdot;c) = u^* \;\text{in } L^2(\Omega).
    \]
\end{enumerate}

If SymPlex converges to a globally optimal policy $\pi^*$ that maximizes
\[
J(\pi) = \mathbb{E}_{T \sim \pi} \left[ \frac{1}{1+\sqrt{\mathcal{E}(T,c_T^*)}} \right],
\]
then $\pi^*$ assigns nonzero probability to at least one tree $T^*$ satisfying
\[
\|u_{T^*}(\cdot;c_{T^*}^*) - u^*\|_{L^2(\Omega)} = 0.
\]

Thus, SymPlex achieves exact symbolic recovery of the PDE solution.
\end{theorem}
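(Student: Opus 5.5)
The argument is an optimization over distributions. The reward is bounded, $0 < \mathcal{R}(T) := (1+\sqrt{\mathcal{E}(T,c_T^*)})^{-1} \le 1$. Equality $\mathcal{R}(T)=1$ holds exactly when $\mathcal{E}(T,c_T^*)=0$. So $J(\pi)\le 1$ for every policy, and the plan is to show that the supremum $1$ is attained. A maximizer must then put all of its mass on trees achieving reward $1$. Assumption (A4) converts reward $1$ into exact $L^2$ recovery.

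\emph{Step 1 (the optimal value is $1$).} By (A2)--(A3), the tree $T^*$ lies in the hypothesis class $\mathcal{T}_D(\mathcal{V})$. Since $u_{T^*}(\cdot;c^*)=u^*$ a.e., the direction ``$\Leftarrow$'' of (A4) gives $\inf_c\mathcal{E}(T^*,c)=0$. Interpreting $c_{T}^*$ as a minimizer of $\mathcal{E}(T,\cdot)$, and taking $c^*$ itself as a minimizer, we get $\mathcal{R}(T^*)=1$. The deterministic policy $\delta_{T^*}$ is realizable by SymFormer. To see this, invoke Theorem~\ref{thm:symformer_structure_universality}, or more directly note that the masked softmax can concentrate on the grammar-valid next token of $T^*$ at each prefix, in the limit or via a point-mass target. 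Hence $J(\delta_{T^*})=1$ and $\sup_\pi J(\pi)=1$.

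\emph{Step 2 (the optimizer's support).} Let $\pi^*$ be globally optimal, so $J(\pi^*)=1$. Then
\[
\mathbb{E}_{T\sim\pi^*}\bigl[1-\mathcal{R}(T)\bigr]=0,
\]
with a nonnegative integrand. Because $\mathcal{T}_D(\mathcal{V})$ is finite, every tree in $\mathrm{supp}(\pi^*)$ has positive mass, and therefore $\mathcal{R}(T)=1$ on the support. Consequently $\mathcal{E}(T,c_T^*)=0$ for each such $T$. The support is nonempty because $\pi^*$ is a probability distribution, so we can pick one such tree, again called $T^*$. The ``$\Rightarrow$'' direction of (A4) then gives $\|u_{T^*}(\cdot;c^*_{T^*})-u^*\|_{L^2(\Omega)}=0$. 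Uniqueness (A1) ensures that ``$u^*$'' is unambiguous.

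\emph{Main obstacle.} The delicate step is the attainment of the infimum over constants. If $\inf_c\mathcal{E}(T,c)=0$ is not attained, then $c_T^*$ is ill-defined and $\mathcal{R}(T)<1$ for every actual choice of $c$. In that case $J$ has supremum $1$ without a maximizer, and a strictly positive-probability policy may fail to exist.

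I would handle this in the following order:
\begin{itemize}
\item Define $\mathcal{R}(T)$ through $\inf_c\mathcal{E}(T,c)$, which is what (A4) controls. This avoids the attainment question entirely, and (A4) is phrased exactly for this.
\item Note that for $T^*$ the infimum is in fact attained at $c^*$.
\item Reading (A4)'s right-hand side as ``some $c$ achieves $u_T(\cdot;c)=u^*$'' then gives a minimizer $c_T^*$ for every support tree.
\end{itemize}

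A secondary subtlety is whether SymFormer can realize an exactly deterministic policy with finite logits. I would sidestep this by working with $J$ over all distributions on the finite set $\mathcal{T}_D(\mathcal{V})$, which is how ``globally optimal'' is stated.
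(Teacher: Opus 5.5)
Your proof is correct, and its second half takes a different and sounder route than the paper's. After noting $\mathcal{R}(T^*)=1$, the paper uses uniqueness (A1) to claim $\inf_c\mathcal{E}(T,c)>0$ for every tree $T\neq T^*$, making $T^*$ the unique reward maximizer. It then asserts that a globally optimal policy assigns positive probability to every reward-maximizing tree. Neither step holds in general. Uniqueness of $u^*$ as a function does not make the tree representing it unique: compare $x+y$ with $y+x$, or the paper's own example of $x$ versus $1.0\cdot x+y-y$. And a point mass on one of several maximizers is already optimal.

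You instead show that the optimal value is $1$, and that $\mathbb{E}_{T\sim\pi^*}[1-\mathcal{R}(T)]=0$ forces every tree in $\mathrm{supp}(\pi^*)$ to have reward $1$. You then apply the ``$\Rightarrow$'' direction of (A4) to an arbitrary support tree. This gives the stronger fact that every support tree recovers $u^*$. The cost is that you do not show the particular witness from (A2) lies in the support. The paper's route would claim that, but it is false in general, and the theorem does not need it once the $T^*$ in its conclusion is read as a fresh variable.

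You also make explicit two points the paper passes over. First, $\sup J=1$ is approached within the SymFormer policy class, so a maximizer there really has $J(\pi^*)=1$. Second, $\inf_c\mathcal{E}(T,c)$ may not be attained; you resolve this by defining the reward through the infimum, as the paper itself does in its near-optimal recovery section.
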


\begin{proof}
By Assumption (A2), there exists $T^*$ and $c^*$ such that $u_{T^*}(\cdot;c^*) = u^*$.  
Assumption (A4) implies $\mathcal{E}(T^*,c^*) = 0$, hence $\mathcal{R}(T^*) = 1$.  

For any $T \neq T^*$, uniqueness implies $\inf_c \mathcal{E}(T,c) > 0$, so $\mathcal{R}(T) < 1$.  
Hence $T^*$ is a global maximizer of the reward.  
A globally optimal policy $\pi^*$ assigns nonzero probability to all reward-maximizing trees, including $T^*$, yielding exact recovery.
\end{proof}

\subsection{Probabilistic Guarantees under Near-Optimal Policies}\label{appen:symplex_recovery}

Let $\mathcal{T}_D(\mathcal{V})$ denote the set of all symbolic expression trees over the SymPlex vocabulary $\mathcal{V}$ up to depth $D$, 
and let $\pi_\theta$ be a stochastic policy (SymFormer) over $\mathcal{T}_D(\mathcal{V})$.  
Define the PDE residual-based reward for a tree $T \in \mathcal{T}_D(\mathcal{V})$ as
\[
\mathcal{R}(T, c_T^*) := \frac{1}{1 + \sqrt{\inf_{c} \mathcal{E}(T,c)}}, 
\qquad c_T^* := \arg\min_c \mathcal{E}(T,c),
\]
where $\mathcal{E}(T,c)$ is the PDE residual with boundary conditions evaluated for tree $T$ and constants $c$.  
Assume that the PDE admits an exact symbolic solution $T^*$ with constants $c^*$ such that $\mathcal{E}(T^*,c^*) = 0$, hence $\mathcal{R}(T^*,c^*) = 1$.

\begin{theorem}[Near-Optimal Probabilistic Recovery]
\label{thm:symplex_near_optimal}
Let $\pi_\theta$ be any policy over $\mathcal{T}_D(\mathcal{V})$, not necessarily optimal, and assume the expected reward satisfies
\[
J(\pi_\theta) := \mathbb{E}_{T \sim \pi_\theta}[\mathcal{R}(T, c_T^*)] \ge 1 - \epsilon, \quad \epsilon \in [0,1).
\]
Then there exists at least one tree $T \in \mathrm{supp}(\pi_\theta)$ such that
\[
\mathcal{R}(T, c_T^*) \ge 1 - \epsilon.
\]
Equivalently, the policy assigns nonzero probability to at least one high-reward symbolic solution whose PDE residual is
\[
\inf_c \mathcal{E}(T,c) \le \frac{\epsilon^2}{(1-\epsilon)^2}.
\]
\end{theorem}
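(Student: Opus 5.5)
The plan is to run an averaging (pigeonhole) argument on the expected reward, then invert the reward map $e \mapsto 1/(1+\sqrt{e})$ to turn the reward bound into a residual bound. The reward $\mathcal{R}(T,c_T^*)$ depends only on the tree $T$, since $c_T^*$ is fixed by $T$. The policy $\pi_\theta$ is a probability distribution over the finite set $\mathcal{T}_D(\mathcal{V})$, which is finite because $\mathcal{V}$ and $D$ are finite. Hence $J(\pi_\theta)=\sum_{T\in\mathrm{supp}(\pi_\theta)}\pi_\theta(T)\,\mathcal{R}(T,c_T^*)$ is a finite convex combination of the values $\mathcal{R}(T,c_T^*)$ over the support.

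\textbf{Step 1 (existence of a high-reward tree).} Suppose, for contradiction, that $\mathcal{R}(T,c_T^*)<1-\epsilon$ for every $T\in\mathrm{supp}(\pi_\theta)$. Because the support is finite and nonempty, the maximum $M$ of these values is attained and satisfies $M<1-\epsilon$. The convex combination then gives $J(\pi_\theta)\le M<1-\epsilon$, which contradicts the hypothesis. So some $T$ in the support has $\mathcal{R}(T,c_T^*)\ge 1-\epsilon$.

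If the tree space were not finite, I would instead use $\mathbb{E}[\mathcal{R}]\le\sup_{\mathrm{supp}}\mathcal{R}$. That gives only a supremum, so the conclusion would weaken to trees with reward at least $1-\epsilon-\delta$ for every $\delta>0$; finiteness removes this issue.

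\textbf{Step 2 (inverting the reward).} Write $e_T:=\inf_c\mathcal{E}(T,c)\ge 0$. Then $\mathcal{R}=1/(1+\sqrt{e_T})$, which is strictly decreasing in $e_T$. If $\epsilon=0$, then $\mathcal{R}=1$ forces $e_T=0$, which matches the bound. For $\epsilon\in(0,1)$, the inequality $1/(1+\sqrt{e_T})\ge 1-\epsilon$ rearranges to $1+\sqrt{e_T}\le 1/(1-\epsilon)$. This gives $\sqrt{e_T}\le \epsilon/(1-\epsilon)$, and squaring yields $e_T\le \epsilon^2/(1-\epsilon)^2$.

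\textbf{Main obstacle.} The algebra is routine; the delicate point is well-definedness. The argmin $c_T^*$ may not exist when the infimum is not attained. I would resolve this by defining $\mathcal{R}$ directly through $\inf_c\mathcal{E}(T,c)$, as the statement already does, so that no attainment is needed. The only remaining ingredient is the finiteness of the support, which makes the maximum in Step 1 attained.
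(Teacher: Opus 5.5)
Your proposal is correct and follows the paper's proof essentially step for step. Both argue by contradiction on the convex combination $J(\pi_\theta)=\sum_{T}\pi_\theta(T)\,\mathcal{R}(T,c_T^*)$ and then invert the strictly decreasing map $e\mapsto 1/(1+\sqrt{e})$. One small point: your aside about infinite tree spaces is too pessimistic, because the paper's termwise bound $\pi_\theta(T)\mathcal{R}(T,c_T^*)<\pi_\theta(T)(1-\epsilon)$, summed over any countable support, already gives the strict inequality $J(\pi_\theta)<1-\epsilon$ without needing the maximum to be attained.
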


\begin{proof}
Let $\mathrm{supp}(\pi_\theta) = \{T \in \mathcal{T}_D(\mathcal{V}) : \pi_\theta(T) > 0\}$. Then
\[
J(\pi_\theta) = \sum_{T \in \mathrm{supp}(\pi_\theta)} \pi_\theta(T) \, \mathcal{R}(T, c_T^*), 
\qquad \sum_{T \in \mathrm{supp}(\pi_\theta)} \pi_\theta(T) = 1.
\]

Suppose, for contradiction, that every tree in the support satisfies
\[
\mathcal{R}(T, c_T^*) < 1 - \epsilon, \quad \forall T \in \mathrm{supp}(\pi_\theta).
\]

Then
\[
J(\pi_\theta) = \sum_{T \in \mathrm{supp}(\pi_\theta)} \pi_\theta(T) \, \mathcal{R}(T, c_T^*) < \sum_{T \in \mathrm{supp}(\pi_\theta)} \pi_\theta(T) (1-\epsilon) = 1 - \epsilon.
\]

This contradicts the assumption that $J(\pi_\theta) \ge 1 - \epsilon$. Therefore, there must exist at least one tree $T \in \mathrm{supp}(\pi_\theta)$ such that
\[
\mathcal{R}(T, c_T^*) \ge 1 - \epsilon.
\]

By the reward definition,
\[
\mathcal{R}(T, c_T^*) = \frac{1}{1 + \sqrt{\inf_c \mathcal{E}(T,c)}} \ge 1 - \epsilon \quad \implies \quad \inf_c \mathcal{E}(T,c) \le \frac{\epsilon^2}{(1-\epsilon)^2}.
\]

This completes the proof.  

\end{proof}

Theorem~\ref{thm:symplex_near_optimal} provides a  probabilistic guarantee: any policy achieving high expected reward necessarily includes at least one near-exact symbolic solution in its support.  
This result holds without any assumptions on smoothness, continuity, or convexity of the PDE residual, making it broadly applicable for symbolic PDE discovery.

\paragraph{Scope of the theoretical guarantees.}
The theoretical results in Sections~\ref{appen:global_optimality} and~\ref{appen:symplex_recovery} serve to characterize the identifiability and representational sufficiency of the proposed SymPlex framework for symbolic PDE discovery.

Specifically, these results establish that, under exact symbolic realizability and global or near-global optimality of the induced policy, the PDE residual–based objective admits no spurious symbolic optima: any globally optimal policy must assign nonzero probability to at least one expression tree representing the true solution. In this sense, the theory guarantees that the search space and reward design are well aligned with the underlying symbolic recovery problem, and that exact recovery is information-theoretically possible within the hypothesis class.

At the same time, the analysis is decoupled from the practical optimization dynamics of deep neural network training. The PDE residual involves differential operators that are unbounded and may be highly nonlinear, and the resulting objective is generally nonconvex with respect to both symbolic structure $T$ and continuous parameters $c$. 
As a consequence, small residual values do not, in general, guarantee a uniform or quantitative bound on the error
$\|u_T(\cdot;c) - u^*\|_{L^2(\Omega)}$ outside the exact realizability regime. Establishing such stability or error bounds would require additional assumptions on the PDE operator, solution regularity, and discretization scheme, which are beyond the scope of the present work.
Moreover, the theory does not claim convergence guarantees for stochastic gradient-based training, nor does it provide quantitative error bounds outside the exact realizability regime.

Nevertheless, these results provide a principled foundation for the proposed approach by clarifying when exact symbolic recovery is theoretically attainable within the proposed hypothesis class.

\section{Additional Results}\label{appen:add_results}
This section presents additional experimental results that further validate the robustness and generality of \textbf{SymPlex} across smooth, non-smooth, and parametric PDEs.

\begin{table*}[t]
\centering
\caption{Additional PDE benchmarks and their analytic solutions.}\label{tab:summary_pdes_appen}
\renewcommand{\arraystretch}{1.4}
\resizebox{\textwidth}{!}{%
\begin{tabular}{c c l c}
\hline
Problem & Name & PDE & Analytic Solution \\ \hline
\multirow{3}{*}{Smooth Problem}
 & Poisson  & $ -u_{xx} - u_{yy} = f(x,y)$ & $u(x,y) = \exp(x)+\exp(y)$ \\
 & Advection  & $u_t + u_x + u_y = 0$ & $u(x,y,t)=\sin(-1.5(x-y-2t))$ \\
 & Heat  & $u_t - u_{xx} - u_{yy} = f(x,y,t)$ & $u(x,y,t)=\cos(2y) + 2.5xt - 0.5x^2$ \\ \hline
\multirow{2}{*}{Non-Smooth Problem}
 & Convex Hamilton-Jacobi & $u_t+ \frac{1}{2}u_x^2 = 0$  & $u(x,t) = \begin{cases}
x - \frac{t}{2} & x \leq \frac{t}{2} \\
0 & x > \frac{t}{2}
\end{cases}$ \\
 & Concave Hamilton-Jacobi & $u_t - \frac{1}{2}u_x^2 = 0$ &  $ u(x,t) = |x| + \frac{t}{2}$ \\ \hline
\multirow{2}{*}{Parametric Solution}
 & Advection & $u_t + \kappa(u_x+u_y) = 0$ & $u(x,y,t;\kappa) = 2\sin(x-\kappa t)\sin(y - \kappa t)$ \\
 & Heat  & $u_t - \kappa(u_{xx} + u_{yy}) = 0$ & $u(x,y,t;\kappa) = \sin(x)\cos(y) e^{-2\kappa t}$ \\ \hline
\end{tabular}}
\renewcommand{\arraystretch}{1} 
\end{table*}

\begin{table*}[ht]
\centering
\caption{Comparison of PDE solvers on mean squared error (MSE) and symbolic recovery rate (SRR) for additional benchmarks.}
\label{tab:comparison_methods_appen}
\renewcommand{\arraystretch}{1.3} 
\resizebox{\textwidth}{!}{%
\begin{tabular}{c|cc|cc|cc|cc|cc}
\hline
\multirow{2}{*}{Problem} & \multicolumn{2}{c|}{\textbf{SymPlex}} & \multicolumn{2}{c|}{SSDE} & \multicolumn{2}{c|}{FEX} & \multicolumn{2}{c|}{PINN+DSR} & \multicolumn{2}{c}{KAN} \\
 & MSE ($\downarrow$) & SRR ($\uparrow$) & MSE ($\downarrow$) & SRR ($\uparrow$) & MSE ($\downarrow$) & SRR ($\uparrow$) & MSE ($\downarrow$) & SRR ($\uparrow$) & MSE ($\downarrow$) & SRR ($\uparrow$) \\ \hline
Poisson & 0 & \textbf{100\%} & 0  & 15\% &$3.71\times10^{-14}$ & 100\% & 0 & 100\%  & $4.43\times 10^{-2}$ & 0\% \\
Advection & 0 & \textbf{100\%} & $8.00\times10^{-1}$ & 0\% & $4.53 \times10^{-1}$& 0\% &$ 2.43\times 10^{-2} $& 0\% & $5.58\times 10^{-1}$ & 0\% \\
Heat & 0 & \textbf{100\%} & $2.91\times10^{-1} $& 0\% & 1.14 & 0\% & 5.14 & 0\% & $2.09\times 10^{-1}$ & 0\% \\
Convex HJ & 0 & \textbf{100\%} & $9.85\times10^{-3}$ & 0\% &$5.29\times10^{-17}$ & 15\% & $2.78\times 10^{-1}$ & 0\% & $3.37\times 10^{-3}$ & 0\% \\
Concave HJ & 0 & \textbf{100\%} & $1.76 \times 10^{-13}$ & 0\% &$1.59\times10^{-15}$ & 0\% & $5.84\times 10^{-4}$ &  0\% & $2.85\times 10^{-2}$ & 0\% \\
Parametric Advection & 0 & \textbf{100\%} & $7.89\times 10^{-1}$ & 0\% & $2.21\times10^{-1}$ & 0\% & $2.5\times 10^{-1}$ & 0\% & $3.02\times 10^{-1}$ & 0\% \\
Parametric Heat & 0 & \textbf{100\%} & $1.25\times 10^{-1}$ & 0\% & $4.04\times10^{-2}$ & 0\% & $1.0\times 10^{-2}$ & 0\% & $3.06\times 10^{-2}$ & 0\%\\ \hline
\end{tabular}}
\renewcommand{\arraystretch}{1} 
\end{table*}

\subsection{Symbolic Solutions}\label{appen:results_symbolic}
\paragraph{Poisson}
\begin{itemize}
    \item True solution: $u(x,y) = x^4 + 1.2y^4$
    \begin{itemize}
        \item SymPlex: \texttt{((y )ˆ4 * 1.2) - (-xˆ4)}
        \item SSDE: \texttt{1.0*xˆ4 *abs(-1) + abs(1.20000114240109*yˆ2*abs(y)ˆ2)}
        \item FEX: \texttt{(0.6752*(((0.7573*((x)\^{}4)+0.8074*((y)\^{}4)+-0.0361))+\\((0.7238*((x)\^{}4)+0.9699*((y)\^{}4)+-0.0427)))+0.0532)	}
        \item PINN+DSR: \texttt{x1**4 + x2**3}
        \item KAN: \texttt{0.02458*(-0.04722*(11.49856*x + 0.20392)**2 + 0.00017*(18.3992*y - 1.02424)**2 + 0.0996)**2 + 0.05557*(8.5816e-5*(18.95968*x + 0.40416)**2 + 0.024693*(12.90672*y + 1.40832)**2 - 0.42082)**2 + 0.00317}
    \end{itemize}

     \item True solution: $u(x,y) = \exp(x) + \exp(y)$
    \begin{itemize}
        \item SymPlex: \texttt{exp(y) + exp(x)}
        \item SSDE: \texttt{1.0 * exp(x) + 1.0 * exp(y)}
        \item FEX: \texttt{(-0.7571*(((-0.6470*(exp(x))+-0.8156*(exp(y))+0.1126))\\-((0.6738*(exp(x))+0.5051*(exp(y))+-0.0746)))+0.1417)	}
        \item PINN+DSR: \texttt{exp(x1) + exp(x2)}
        \item KAN: \texttt{-0.0757*(0.00193*(14.96192*x + 0.58808)**2 + 0.00238*(15.66704*y + 0.39208)**2 - 5.08753)**2 - 26.05107*log(0.64468*log(1.868 - 0.936*y) + 0.64242*log(12.83984 - 6.82176*x) + 7.53739) + 62.843}
    \end{itemize}
\end{itemize}

\paragraph{Advection}
\begin{itemize}
    \item True solution: $u(x,y,t)=\exp(-((x-t)^2+(y-t)^2)/0.5)$
    \begin{itemize}
        \item SymPlex: \texttt{exp(-(2.0*((x - t)\^{}2)))*exp(((t + (-y))*(-2.0*(t + (-y)))))}
        \item SSDE: \texttt{exp(-4*t\^{}2 + t - t*exp(-2*x\^{}2) - 2*x\^{}2)}
        \item FEX: \texttt{(0.3507*sin((((-3.2797*(t)+-0.2790*(x)+0.9215*(y)+0.2614))\\-((0.6268*(exp(t))+-1.3011*(exp(x))+-0.5694*(exp(y))+-0.2636))))+0.4102)	}
        \item PINN+DSR: \texttt{0.24384981189819324*exp(-1.2714642227548834*sin(x1*t + x1 + x2 - t - 1.355995834300186*exp(t)))} )
        \item KAN: \texttt{0.03160*(3.15352*sin(2.92*t - 3.04648) + 1.22909*sin(2.94400*x - 3.06072) + 1.22897*sin(2.944*y - 3.06072) + 0.13287)*2 + 0.09498(1.77166*sin(2.9456*t + 7.93832) - 0.69882*sin(2.96672*x - 1.49456) - 0.69868*sin(2.96656*y - 1.49448) + 0.00878)**2 + 0.09405*sin(-190.95396*sin(0.03072*t + 9.20464) + 9.80012*sin(0.29296*x - 9.53936) + 9.79645*sin(0.29312*y - 9.53888) + 41.01414) - 0.05497}
    \end{itemize}

     \item True solution: $u(x,y,t)=\sin(-1.5(x-y-2t))$
    \begin{itemize}
        \item SymPlex: \texttt{sin(-(1.49*(((x-2.0*t)-1.5*y)))}
        \item SSDE: \texttt{-sin(3*t + cos(1.493825166410878*x - 1.493825166410878\\
        *exp(0.47106554937118056*sin(1.5016945847398988*y)))/cos(1))}
        \item FEX: \texttt{(0.4659*cos((((0.3394*(cos(t))+0.2019*(cos(x))+0.1051*(cos(y))+-0.4466))\\*((4.3637*((t)**3)+-0.0688*((x)**3)+0.0582*((y)**3)+-2.1432))))+-0.2713)	}
        \item PINN+DSR: \texttt{cos(-2*x1 + x2 + 3.571486073506636*t + 4.571486073506636\\*log(0.13426961654716008*x1 + 0.13426961654716008*x2 - 0.13426961654716008*t + 0.59328704350404904)}
        \item KAN: \texttt{-0.233*(0.0004*(19.47968*t - 1.2085)**2 - 0.5835*sin(1.4559*x - 12.9665) + 1.1314*sin(1.4514*y - 8.3577) - 0.025)**2 - 1.6098*sin(0.1695*sin(1.344*t + 3.273) - 0.3104*sin(1.5279*x - 14.8) + 0.6178*sin(1.4258*y + 2.7329) - 1.536) - 0.4736*sin(0.48*sin(2.1766*t + 5.4754) + 13.683*sin(0.1224*x - 3.1829) + 7.9633*sin(0.2095*y - 1.0298) - 1.4289) - 1.2431}
    \end{itemize}
\end{itemize}

\paragraph{Heat}
\begin{itemize}
    \item True solution: $u(x,y,t)=\sin(x)\cos(y)\exp(-2t)$
    \begin{itemize}
        \item SymPlex: \texttt{sin(x) * (exp((-2.0*(k * t))) * (0.99 * cos(y)))}
        \item SSDE: \texttt{0}
        \item FEX: \texttt{(-0.2037*(-(((-0.1043*(t)+-0.4521*(x)+0.0168*(y)+1.4702))\\*((0.0255*(cos(t))+-0.0466*(cos(x))+0.8119*(cos(y))+0.0158))))+0.0007)	}
        \item PINN+DSR: \texttt{exp(-t)*sin(x1)*cos(x2)}
        \item KAN: \texttt{-3.6052*exp(-0.0837*sin(1.8127*x - 7.2926) - 0.0508*sin(1.946*y - 4.5382) - 1.7669*cos(0.8083*t - 1.59)) - 2.3564*sin(0.003*(17.4019*t - 12.8574)**2 + 0.591*sin(0.9649*x + 6.3775) + 0.5314*cos(0.8948*y - 2.8096) - 9.129) - 2.3379*cos(0.9595*log(7.0464*t + 3.9153) + 0.4821*sin(0.9362*y - 1.3687) + 0.5542*cos(0.9647*x - 1.4504) - 3.7186) + 0.6401}
    \end{itemize}

     \item True solution: $u(x,y,t)=\cos(2y) + 2.5xt - 0.5x^2$
    \begin{itemize}
        \item SymPlex: \texttt{(0.5*(x*(-(x) + (5.0*t)))) + cos((2.0*y))}
        \item SSDE: \texttt{-0.02555*t*x\^{}2 + t*sin(2*y + 1.5725536) + t - 0.02555*x\^{}2 + sin(2*y + 1.5725536)}
        \item FEX: \texttt{(-0.9307*((((1.9975*(sin(t))+0.0184*(sin(x))+-0.0045*(sin(y))+0.2556))\\-((-3.8508*(-(t))+0.6605*(-(x))+-0.0006*(-(y))+-0.2578))))\^{}2+0.8235)}
        \item PINN+DSR: \texttt{-2*x1 + t*(2*x1 - cos(x1))}
        \item KAN: \texttt{0.1721*(0.005*log(8.9166 - 1.3873*x) + 2.0578*sin(1.0079*y + 1.4314) - 0.0037*cos(6.5794*t + 4.5099) - 0.2353)**2 - 0.4106*(84.4225*log(0.1299*x + 9.3657) + 10.2754*sin(0.2658*t - 3.3628) - 0.0019*cos(1.0499*y - 6.078) - 191.2278)**2 + 14.4212*cos(0.0007*(17.2938*t + 0.5879)**2 + 6.97e-5*(2.2204*x - 0.4122)**2 + 0.0412*cos(2.0621*y - 6.2932) - 8.12) + 3.3748}
    \end{itemize}
\end{itemize}

\paragraph{Hamilton-Jacobi}
\begin{itemize}
    \item True solution: $u(x,y,t)=\begin{cases} \sqrt{x^2+y^2} - t, & \sqrt{x^2+y^2} \geq t \\ 0, & \sqrt{x^2+y^2} < t \end{cases}$
    \begin{itemize}
        \item SymPlex: \texttt{max[0,(sqrt((((1.0*x)*x) + (y*y))) - ((t))]}
        \item SSDE: \texttt{ -t*abs(-x\^{}2)\^{}0.5 + t + abs(-x\^{}2)\^{}0.5}
        \item FEX: \texttt{(0.6810*abs(max(((-0.0003*(sin(t))+-0.0547*(sin(x))+0.0017*(sin(y))+-0.0054)), ((-1.0607*(abs(t))+0.8034*(abs(x))+0.8259*(abs(y))+0.3004))))+-0.0039)}
        \item PINN+DSR: \texttt{(x2 - cos(1.422708857249582*x1) + 0.9382428805260224)*\\exp(-1.0529386972249486*t)}
        \item KAN: \texttt{0.0146*(1.6309*log(12.8874 - 9.7645*x) - 3.5841*sin(1.792*t + 2.2865) + 27.996*sin(0.4789*y + 1.5966) - 31.7366)**2 + 0.0028*exp(0.0021*(18.6*y - 8.288)**2 + 2.8372*sin(1.6814*t + 1.5954) - 1.4728*sin(3.719*x - 5.26)) + 0.1995*sin(1.7957*exp(0.9773*t) + 1.7014*log(11.2713 - 7.9086*x) + 3.34736*sin(1.0034*y + 1.7932) - 5.5221) + 0.2447}
    \end{itemize}

     \item True solution: $u(x,y,t)=|x|+|y|+t$
    \begin{itemize}
        \item SymPlex: \texttt{(abs(y) + ((-0.0 + abs(x)) - (-0.2436 * (t / 0.2436))))}
        \item SSDE: \texttt{max(1.0 - cos(t), t + max(1.0 - cos(x), x + max(-y + abs(y*cos(y*(y - abs(y)))), y)))}
        \item FEX: \texttt{(-0.6246*(-(((0.7982*(abs(t))+0.7998*(abs(x))+0.8005*(abs(y))+-0.1432))\\-((-0.8029*(abs((t))+-0.8013*(abs(x))+-0.8006*(abs(y))+0.1407))))+0.1773)	}
        \item PINN+DSR: \texttt{-x1*sin(0.2336406634604177*x1 - 1.7473279933305932) + 1.0023923461698605*x2 + t}
        \item KAN: \texttt{1.5615*exp(-0.0591*sin(3.344*t - 6.271) + 0.7764*sin(0.7765*x + 5.6002) - 1.034*sin(0.5987*y + 2.4682)) + 1.0189*sin(0.654*sqrt(9.5133*t + 4.9446) - 0.0936*sin(1.4299*x + 2.2603) + 0.0671*sin(2.0483*y - 7.4278) - 1.8406) - 0.5665*sin(0.0009*(15.18*t - 8.7903)**2 + 0.8967*sqrt(6.0986*x + 5.8906) + 0.8265*sqrt(6.9141*y + 6.314) - 7.7371) + 0.1685}
    \end{itemize}

    \item True solution: $u(x,t) = \begin{cases}
x - \frac{t}{2} & x \leq \frac{t}{2} \\
0 & x > \frac{t}{2}
\end{cases}$
 \begin{itemize}
        \item SymPlex: \texttt{max[0,(((t + t) + x) + (-1.5 * t))]}
        \item SSDE: \texttt{t*(1 - max(cos(1.0*max(max(1.0, t), 1.0*t)), max(-x + abs(1.0*abs(x)), 1.00000001)))}
        \item FEX: \texttt{(0.8446*(-max(((-0.5920*(-(t))+1.1840*(-(x))+-0.0674)), ((0.0000*((t)\^{}2)+-0.0000*((x)\^{}2)+-0.0674))))+-0.0569)}
        \item PINN+DSR: \texttt{x1 + 0.4973913295197155*t - 0.00585512341908971 + 0.003366022247412346*cos(x1**2)/(0.05801743054817531*x1 - 1)**2}
        \item KAN: 
        \texttt{0.0367*(3.6919*sin(1.5374*x - 7.0127) - 1.7691*cos(1.488*t - 7.7062) + 4.0454)**2 - 0.0241*exp(-0.3833*sin(4.6902*t + 5.2557) + 3.1955*cos(1.7174*x - 2.214)) - 0.016} 
    \end{itemize}

    \item $ u(x,t) = |x| + \frac{t}{2}$
     \begin{itemize}
        \item SymPlex: \texttt{abs((0.5 * t)) + abs(x)}
        \item SSDE: \texttt{0.5*t + max(x + 0.49999958, x\^{}2*(0.90578896 - x)/(x\^{}3 + cos((1.1099373*x - 0.075162254)/x))) - 0.5}
        \item FEX: \texttt{(0.5909*(-(((-0.4223*(t)+-0.8425*(x)+0.1570))+((-0.4238*(abs(t))\\+-0.8498*(abs(x))+0.1427))))+0.1771)	}
        \item PINN+DSR: \texttt{0.3685911419651037*exp(sin(t)) + 0.3685911419651037\\*log(4.698437728826789*x1**2 + 0.41390626642149014)}
        \item KAN: \texttt{1.5535*sin(0.28*sin(3.3566*x - 1.5729) + 46.5592 - 40.6184*exp(-0.008*t)) - 0.0652*cos(0.0005*(8.5555 - 15.0792*t)**2 - 1.9124*cos(4.8933*x - 0.0014) + 3.4232) + 0.9716}
    \end{itemize}
\end{itemize}

\paragraph{Parametric Advection}
\begin{itemize}
    \item True solution: $u(x,y,t;\kappa)=\max\{1 - |x-\kappa t| - |y-\kappa t|, 0\}$
    \begin{itemize}
        \item SymPlex:  \texttt{max[(1-(abs(((k * t) - y))+(1.0*abs((x - (abs(t)*k))))))]}
        \item SSDE: \texttt{ -(1 + 1/cos(exp(exp(t\^{}2 - t))))*log(cos(2.49999998480632e-9*x))}
        \item FEX: \texttt{(-0.2690*sin((((-0.3752*(cos(t))+-0.1229*(cos(x))+-0.1252*(cos(y))\\+-0.2002*(cos(k))+-0.2516))+((-1.4345*(exp(t))+1.2719*(exp(x))+1.3023*(exp(y))\\+-0.7243*(exp(k))+-0.2560))))+0.2257)	}
        \item PINN+DSR: \texttt{sin(k/(x1**2 + x2*k/t + k))}
        \item KAN: \texttt{0.0359*(0.0035*(5.5666 - 15.1003*t)**2 - 0.5089*sin(2.0533*k - 5.0106) + 1.9593*sin(2.1323*y + 5.1286) - 1.9572*cos(2.1382*x - 5.8693) - 1.3651)**2 + 0.0081*(-0.0112*(13.2166 - 19.913*k)**2 - 0.0063*(18.004 - 19.616*t)**2 - 1.0794*cos(2.816*x + 2.0031) - 1.093*cos(2.7966*y + 2.0105) + 2.1919)**3 + 0.0281*(37.0255*sqrt(0.8082*t + 7.9752) + 2.9374*cos(2.7307*k - 2.7356) - 1.2098*cos(1.8331*x - 3.1545) - 1.2004*cos(1.8374*y - 3.1542) - 105.5795)**2 - 5.7881e-6*exp(0.5272*sin(2.2357*y - 4.3275) + 1.4967*cos(2.4549*k - 2.2367) - 11.8356*cos(0.381*t + 2.4254) + 0.5254*cos(2.2469*x + 6.6636)) - 0.0287}
    \end{itemize}

     \item True solution: $u(x,y,t;\kappa) = 2\sin(x-\kappa t)\sin(y - \kappa t)$
    \begin{itemize}
        \item SymPlex: \texttt{2.0*(sin(((k * t) - x))*sin(((k * t) - (-0.0 + y))))}
        \item SSDE: \texttt{(-t + 0.04676*(t*x + t*cos(x) + x*exp(sin(exp(exp(exp(-exp(t)))))) + \\x + exp(sin(exp(exp(exp(-exp(t))))))*cos(x) + cos(x))*\\
        exp(2.812062879082025*cos(0.233057645364174*y))\\ - exp(sin(exp(exp(exp(-exp(t)))))) - 1)\\ *exp(-2.812062879082025*cos(0.233057645364174*y))*sin(x)}
        \item FEX: \texttt{(-0.9510*(-(((0.2382*(sin(t))+-1.4255*(sin(x))+-0.0039*(sin(y))\\+0.0583*(sin(k))+-0.0063))*((0.2376*(sin(t))+0.0045*(sin(x))+-1.4162*(sin(y))\\+0.0632*(sin(k))+-0.0249))))+-0.0051)	}
        \item PINN+DSR: \texttt{exp(-t*log(t))*sin(x1)*sin(x2)}
        \item KAN: \texttt{-0.9746*sin(-0.001*exp(3.7939*k) - 36.0416*log(0.2584*x + 8.4656) + 44.753*log(0.0661*y + 2.733) + 0.0048*cos(4.0712*t + 5.1798) + 24.1655) + 0.0736*sin(-2.3913*sin(2.6296*k - 3.8908) + 4.7596*sin(1.112*t - 2.3346) + 2.7388*sin(0.4017*x + 7.9546) + 1.7815*sin(1.1002*y + 1.5346) + 4.4654) + 0.1914*sin(1.1142*sin(4.1014*k + 1.3522) + 4.7857*sin(0.2152*x + 2.3378) + 3.7566*sin(0.2915*y + 2.293) + 0.6573*cos(1.844*t - 5.7138) - 9.0414) - 0.8441*cos(1.2081*sin(0.8658*t - 9.4502) + 7.5212*sin(0.1352*x + 5.7821) - 0.3877*cos(2.648*k - 2.4958) + 8.7439*cos(0.1146*y - 1.9749) - 5.2825) + 0.0042}
    \end{itemize}
\end{itemize}

\paragraph{Parametric Heat}
\begin{itemize}
    \item True solution: $u(x,y,t;\kappa)=\exp(-x)\exp(-y)\exp(2\kappa t)$
    \begin{itemize}
        \item SymPlex:  \texttt{exp((-((k*(-t)))-(x)))* (exp(-(y)) * exp(((1.0*k) * t)))}
        \item SSDE: \texttt{exp(-t*exp(-x - y) - x - y)}
        \item FEX: \texttt{(0.4461*exp((((-0.0074*((t)\^{}2)+-0.4391*((x)\^{}2)+-0.4455*((y)\^{}2)+-\\0.1241*((k)\^{}2)+0.4223))-((0.5872*(sin(t))+0.4642*(sin(x))+0.4686*(sin(y))\\+0.2501*(sin(k))+-0.4364))))+-0.0702)	}
        \item PINN+DSR: \texttt{exp(-x2 + k*(t - sin(x1))*exp(k))}
        \item KAN: \texttt{1.091e-12*(9.3426 - 1.1838*y)**6.0009*(9.5974 - 1.1842*x)**5.9934\\*exp(-0.0016*(4.9682 - 19.5366*t)**2 + 0.3216*cos(2.68*k - 1.1166)) + 5.5791e-49*(7.5669*k + 5.9751)**3.7466*(6.9658*t + 3.2789)**2.8071*exp(64.2979*exp(-0.016*y) + 32.4132*exp(-0.032*x)) - 0.1706*cos(119.1116*exp(0.008*y) - 0.1836*log(14.4347 - 10.975*t) + 0.1195*log(14.901*k + 1.1798) + 3.3667*log(1.9598*x + 6.2394) - 132.9443) + 0.1175*cos(-0.4637*log(2.4778*t + 0.417) + 0.13*sin(2.3288*y - 7.5748) - 0.1632*cos(2.8358*k + 3.7684) + 0.1842*cos(2.2986*x - 8.9304) + 1.0616) + 0.1117}
    \end{itemize}

     \item True solution: $u(x,y,t;\kappa) = \sin(x)\cos(y) e^{-2\kappa t}$
    \begin{itemize}
        \item SymPlex: \texttt{(-1.0*exp((k*t)))*(sin((-x))*cos(y))}
        \item SSDE: \texttt{-2.629e-7 - 1.0*exp(-21.239998988740282*cos(0.0005091568214335449*y))\\*sin(log(exp(x)))}
        \item FEX: \texttt{(0.6203*sin((((-0.0033*(cos(t))+-0.0144*(cos(x))+0.1902*(cos(y))\\+0.0015*(cos(k))+0.0093))*((-1.1294*((t)\^{}3)+-0.0909*((x)\^{}3)+-0.0003*((y)\^{}3)\\+-0.4277*((k)\^{}3)+3.9474))))+0.0039)	}
        \item PINN+DSR: \texttt{sin(x1)*cos(x2)*cos(t+k)}
        \item KAN: \texttt{0.6537*sin(0.4163*log(15.3317*t + 1.4123) + 0.833*sin(0.878*y - 1.1872) + 0.5411*cos(0.9977*x - 4.7147) - 7.5457 - 1.143*exp(-2.5046*k)) - 1.0089*cos(0.3796*log(14.3003*t + 1.2277) - 0.8264*sin(1.0267*y - 1.6535) + 0.355*cos(0.9995*x - 1.5578) + 0.4552 - 1.071*exp(-2.4462*k)) - 0.6581*cos(-0.4131*log(14.872*t + 1.364) + 0.5304*sin(1.0003*x - 3.1342) - 0.8351*sin(0.8725*y - 1.1702) + 9.1139 + 1.1386*exp(-2.5285*k)) + 1.0283*cos(-0.3737*log(16.4254*t + 1.3994) + 0.3582*sin(0.9968*x + 6.2803) + 0.8163*sin(1.0295*y - 1.6621) + 5.8859 + 1.0541*exp(-2.4509*k)) - 0.024}
    \end{itemize}
\end{itemize}

\subsection{Training Dynamics Across Curriculum Stages}
To verify both the stability of the training process and the effectiveness of the proposed curriculum learning strategy, we visualize the reward evolution over training steps.
Figure~\ref{fig:curriculum_reward} shows the results for the two parametric PDE examples listed in Table~\ref{tab:summary_pdes}.
The figure compares the training dynamics of a single-stage approach, which attempts to solve the full problem at once without curriculum learning, and the proposed curriculum learning strategy.

For curriculum learning, a single SymFormer model is trained sequentially through Stages~1, 2, and~3, and we plot the reward as a function of epochs across all stages.
Note that the reward definition differs across stages, reflecting the increasing task complexity.
In Stage~1, a reward of 1 indicates that the model has correctly learned the spatial representation of the initial condition.
In Stage~2, a reward of 1 corresponds to accurately solving the PDE with the parameter $\kappa$ fixed to 1.
Finally, in Stage~3, a reward of 1 signifies that the model has successfully learned the full parametric PDE solution, including the dependency on $\kappa$, which matches the reward definition used in the single-stage setting.

From the results, we observe that the reward consistently improves within each stage, without severe saturation at specific expressions or noticeable training instability.
In comparison to training without curriculum learning, these results indicate that expressions learned in earlier stages effectively serve as useful priors for subsequent, more complex settings.
Compared to the single-stage approach, which struggles to effectively identify solutions due to the significantly enlarged symbolic search space, the proposed curriculum strategy empirically demonstrates its effectiveness in mitigating search space complexity and facilitating more efficient optimization.
Overall, this empirical evidence supports the stability of our reinforcement learning training scheme and demonstrates the effectiveness of the proposed curriculum in handling increasingly large symbolic search spaces without training collapse.

\begin{figure}[t]
    \centering

    \begin{subfigure}{\linewidth}
        \centering
        \includegraphics[width=0.7\linewidth]{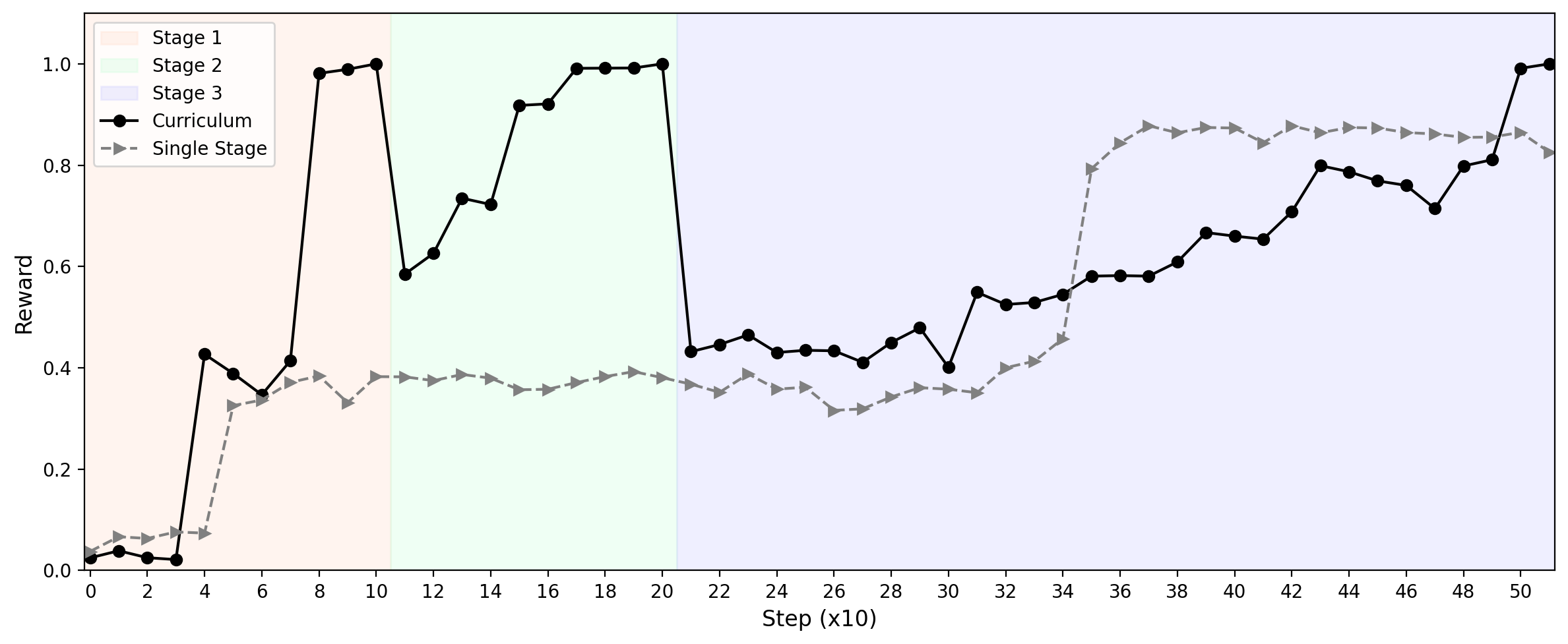}
        \caption{Parametric Advection}
    \end{subfigure}

    \vspace{0.5em}

    \begin{subfigure}{\linewidth}
        \centering
        \includegraphics[width=0.7\linewidth]{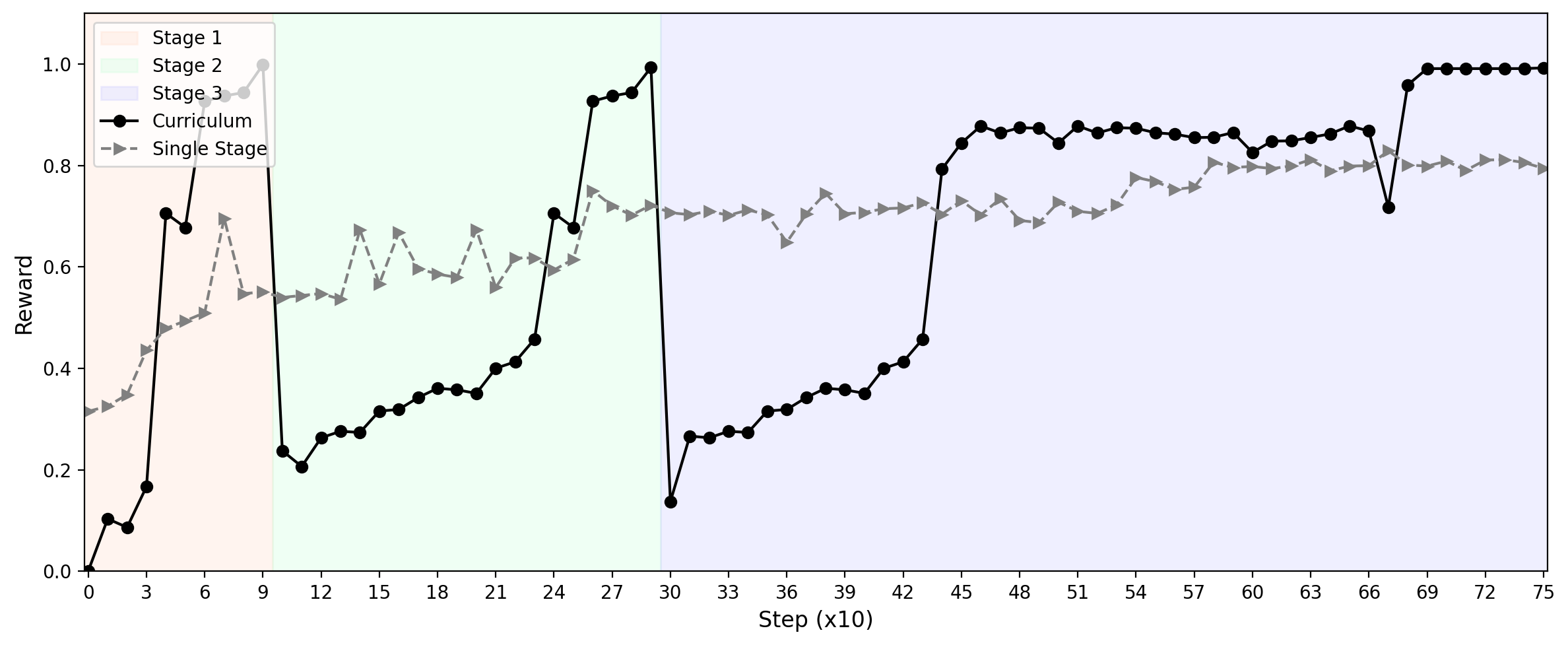}
        \caption{Parametric Heat}
    \end{subfigure}

    \caption{Reward evolution of SymPlex with and without curriculum learning for two parametric PDEs in Table~\ref{tab:summary_pdes}.
The single-stage setting learns the full problem at once without curriculum learning, while curriculum learning trains a single SymFormer sequentially through Stages 1, 2, and 3.
The results show stable optimization across stages under the proposed curriculum.
}
    \label{fig:curriculum_reward}
\end{figure}